\newcommand{\E}{\mathbb{E}}
\DeclareMathOperator*{\argmin}{arg\,min}
\DeclareMathOperator*{\argmax}{arg\,max}
\definecolor{Gray}{gray}{0.9} 
	\newtheorem{definition}{Definition}}
\newtheorem{lemma}{Lemma}
\newtheorem{remark}{Remark}
\newtheorem{theorem}{Theorem}
\newtheorem{proposition}{Proposition}
\newenvironment{proof}{\paragraph{Proof:}}{\hfill$\square$}
\newenvironment{proof2}{\textbf{Proof}}{\hfill$\square$}
\author{
 Farnood Salehi$^1$  \\
  \And 
   Patrick Thiran$^2$ \\
   \AND 
   L. Elisa Celis$^3$ \\
 $^{1,2,3}$ School of Computer and Communication Sciences\\
  \'Ecole Polytechnique F\'ed\'erale de Lausanne (EPFL) \\
   \texttt{firstname.lastname@epfl.ch} \\
}
\title{Coordinate Descent with Bandit Sampling}
\begin{document}

\maketitle
\begin{abstract}
Coordinate descent methods usually minimize a cost function by updating a random decision variable (corresponding to one coordinate) at a time.  
Ideally, we would update the decision variable that yields the largest decrease in the cost function.
However, finding this coordinate would require checking all of them, which would effectively negate the improvement in computational tractability that coordinate descent is intended to afford. 
To address this, we propose a new adaptive method for selecting a coordinate.
First, we find a lower bound on the amount the cost function decreases when a coordinate is updated.
We then use a multi-armed bandit algorithm to learn which coordinates result in the largest lower bound by {interleaving} this learning with conventional coordinate descent updates except that the coordinate is selected proportionately to the expected decrease. 
We show that our approach improves the convergence of  coordinate descent methods both theoretically and experimentally.
\end{abstract}

\section{Introduction}
Most supervised learning algorithms minimize an empirical risk cost function over a dataset.  
Designing fast optimization algorithms for these cost functions is crucial, especially as the size of datasets continues to increase.
(Regularized) empirical risk cost functions can often be written as
\vspace{-0.2em}
\begin{equation} \label{eq:cost_function}
	F(\bm{x})= f(A\bm{x}) + \sum_{i=1}^{d} g_i(x_i),
\end{equation}
where $f(\cdot):\mathbb{R}^n \longrightarrow \mathbb{R}$ is a smooth convex function, $d$ is the number of decision variables (coordinates) on which the cost function is minimized, which are gathered in vector $\bm{x} \in \mathbb{R}^d$,  $g_i(\cdot):\mathbb{R}\longrightarrow \mathbb{R}$  are convex functions for all $i \in [d]$, and $A\in \mathbb{R}^{n\times d}$ is the data matrix.
As a running example, consider Lasso: if $\bm{Y} \in \mathbb{R}^n$ are the labels, 
$f(A\bm{x}) = \nicefrac{1}{2n} \|\bm{Y}-A\bm{x}\|^2$, where $\|\cdot\|$ stands for the Euclidean norm, and $g_i(x_i) = \lambda|x_i|$.
When Lasso is minimized, $d$ is the number of features, whereas when the dual of Lasso is minimized, $d$ is the number of datapoints.

The gradient descent method is widely used to minimize \eqref{eq:cost_function}. However, computing the gradient of the cost function $F(\cdot)$ can be computationally prohibitive. To bypass this problem, two approaches have been developed: (i) 
Stochastic Gradient Descent (SGD) selects one \emph{datapoint} to compute an unbiased estimator for the gradient at each time step, and
(ii) Coordinate Descent (CD) selects one \emph{coordinate} to optimize over at each time step. 
In this paper, we focus on improving the latter technique.

When CD was first introduced, algorithms did not differentiate between coordinates; each coordinate $i\in[d]$ was selected uniformly at random at each time step (see, e.g., \cite{SZ2013a,SZ2013b}).
However, recent works (see, e.g., \cite{GD2013,ZZ2014, PCJ2017}) have shown that exploiting the structure of the data and sampling the coordinates from an appropriate non-uniform distribution can result in better convergence guarantees, both in theory and practice.
The challenge is to find the appropriate non-uniform sampling distribution with a lightweight mechanism that maintains the computational tractability of CD.

\begin{figure}[t] 
	\includegraphics[width=10cm]{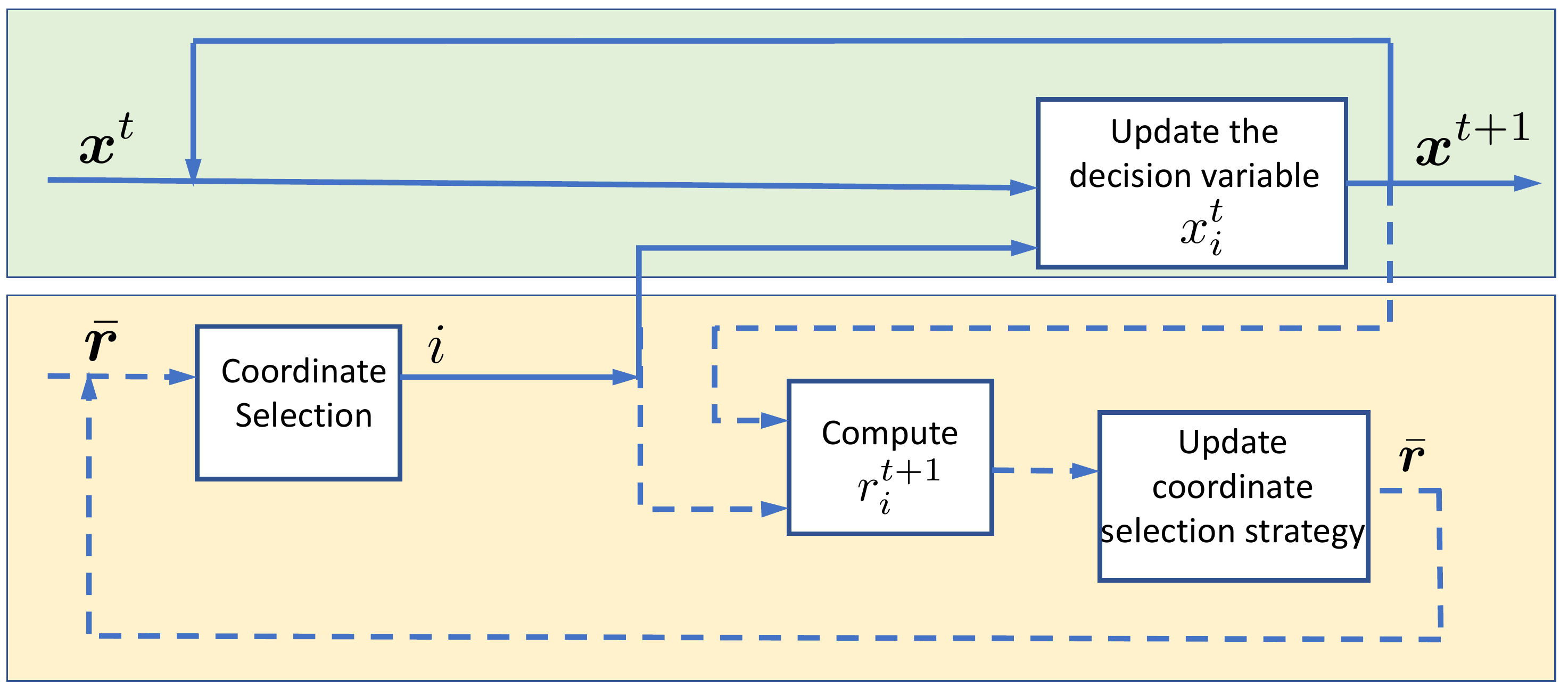} 
	\centering
	\caption{Our approach for coordinate descent. 
	The top (green) part of the approach handles the updates to the decision variable $x_i^t$ (using whichever CD update is desired; our theoretical results hold for updates in the class $\mathcal{H}$ in Definition~\ref{def:s_update} in the supplementary materials. 
	The bottom (yellow) part of the approach handles the selection of $i\in [d]$ according to a coordinate selection strategy which is updated via bandit optimization (using whichever bandit algorithm is desired) from $r^{t+1}_i$.}
    \label{fig:algorithm}
\end{figure}


In this work, we propose a novel adaptive non-uniform coordinate selection method that can be applied to both the primal and dual forms of a cost function. The method exploits the structure of the data to optimize the model by finding and frequently updating the most predictive decision variables.
In particular, for each $i\in [d]$ at time $t$, a lower bound $r^t_i$ is derived (which we call the \emph{marginal decrease}) on the amount by which the cost function will decrease when only the $i^{th}$ coordinate is updated. 

The marginal decrease $r^t_i$ quantifies by how much updating the $i^{th}$ coordinate is guaranteed to improve the model.
The coordinate $i$ with the largest $r^t_i$ is then the one that is updated by the algorithm max\_r, described in Section~\ref{sec:SDCA}.
This approach is particularly beneficial when the distribution of $r^t_i$s has a high variance across  $i$; in such cases  
updating different coordinates can yield very different decreases in the cost function. 
For example, if the distribution of $r^t_i$s has a high variance across  $i$, max\_r is up to $d^2$ times better than uniform sampling, whereas state-of-the-art methods can be at most $d^{3/2}$ better than uniform sampling in such cases (see Theorem~\ref{thm:main} in Section~\ref{sec:SDCD}).
More precisely, in max\_r the convergence speed is proportional to the ratio of the duality gap to the maximum coordinate-wise duality gap. 
max\_r is able to outperform existing adaptive methods because it explicitly finds the coordinates that yield a large decrease of the cost function, instead of computing a distribution over coordinates based on an approximation of the marginal decreases.

However, the computation of the marginal decrease $r^t_i$ for all $i\in [d]$ may still be computationally prohibitive.
To bypass this obstacle, we adopt in Section~\ref{sec:bandit} a principled approach (B\_max\_r) for learning the best $r_i^t$s, instead of explicitly computing all of them: At each time $t$, we choose a single coordinate~$i$ and update it. Next, we compute the marginal decrease $r^t_i$ of the selected coordinate $i$ and use it as feedback to adapt our coordinate selection strategy using a bandit framework.
Thus, in effect, we learn estimates of the $r_i^t$s and simultaneously optimize the cost function (see Figure~\ref{fig:algorithm}).
We prove that this approach 
can perform almost as well as max\_r, yet decreases the number of calculations required by a factor of $d$ (see Proposition~\ref{thm:main_bandit}).

We test this approach on several standard datasets, using different cost functions (including Lasso, logistic and ridge regression) and for both the adaptive setting (the first approach) and the bandit setting (the second approach). 
We observe that the bandit coordinate selection approach accelerates the convergence of a variety of CD methods (e.g., StingyCD \cite{JG2017} for Lasso in Figure~\ref{fig:SDCA}, dual CD \cite{ST2011} for $L_1$-regularized logistic-regression in Figure~\ref{fig:regression}, and dual CD \cite{NSLFK2015} for ridge-regression in Figure~\ref{fig:regression}). 
Furthermore, we observe that in most of the experiments B\_max\_r (the second approach) converges as fast as max\_r (the first approach), while it has the same computational complexity as CD with uniform sampling (see Section~\ref{sec:experiment}).

\section{Technical Contributions} \label{sec:main}

\subsection{Preliminaries} \label{sec:priminilary}
Consider the following primal-dual optimization pairs
\begin{align} \label{eq:dual_pairs}
	&\min_{\bm{x} \in \mathbb{R}^d}F(\bm{x})  = f(A\bm{x}) + \sum_{i=1}^{d} g_i(x_i), \hspace{1em}
	\min_{\bm{w} \in \mathbb{R}^n} F_D(\bm{w})  = f^\star(\bm{w}) + \sum_{i=1}^d g^\star_i(-\bm{a}_i^\top \bm{w}),
\end{align}
where $A=[\bm{a}_1,\ldots,\bm{a}_d]$, $\bm{a}_i\in \mathbb{R}^n$, and $f^\star$ and $g^\star_i$ are the convex conjugates of $f$ and $g_i$, respectively.\footnote{Recall that the convex conjugate of a function $h(\cdot):\mathbb{R}^d \longrightarrow \mathbb{R}$ is  $h^\star(\bm x) = \sup_{\bm{v}\in\mathbb{R}^d} \{\bm{x}^\top \bm{v} - h(\bm{v})\}$.}
The goal is to find $\bar{\bm{x}} \coloneqq  \text{argmin}_{\bm{x} \in \mathbb{R}^d}F(\bm{x})$.
In rest of the paper, we will need the following notations.
We denote by $\epsilon(\bm{x}) = F(\bm{x}) - F(\bar{\bm{x}})$ the \emph{sub-optimality gap} of $F(\bm{x})$, and by $G(\bm{x},\bm{w}) = F(\bm{x}) - (-F_D(\bm{w}))$ the \emph{duality gap} between the primal and the dual solutions, which is an upper bound on $\epsilon(\bm{x})$ for all $\bm{x}\in \mathbb{R}^d$.
We further use the shorthand $G(\bm{x})$ for $G(\bm{x},\bm{w})$ when $\bm w = \nabla f(A\bm{x})$.
For $\bm w = \nabla f(A\bm{x})$, using the Fenchel-Young property $f(A\bm{x}) + f^\star(\bm{w}) = (A\bm{x})^\top \bm{w}$, $G(\bm{x})$ can be written as
$G(\bm{x}) = \sum_{i=1}^d G_i(\bm{x})$ where $G_i(\bm{x}) = \left( g^\star_i(-\bm{a}_i^\top \bm{w}) + g_i(x_i) + x_i\bm{a}_i^\top \bm{w}  \right)$ is the $i^{th}$ \emph{coordinate-wise duality gap}. Finally, we denote by  $\kappa_i = \bar{u} - x_i$ the $i^{th}$ \emph{dual residue} where $\bar{u} =  \argmin_{u\in \partial g^\star_i(-\bm{a}^\top_i \bm{w})} |u-x_i|$ with $\bm{w} = \nabla f(A\bm{x})$.
%

\subsection{Marginal Decreases}  \label{sec:marginal}
Our coordinate selection approach works for a class $\mathcal H$ of update rules for the decision variable $x_i$.
For the ease of exposition, we defer the formal definition of the class $\mathcal H$ (Definition~\ref{def:H}) to the supplementary materials and give here an informal but more insightful definition.  
The class $\mathcal H$ uses the following \emph{reference} update rule for $x_i$, when $f(\cdot)$ is $1/\beta$-smooth and $g_i$ is $\mu_i$-strongly convex: $x_i^{t+1} = x_i^t + s_i^t \kappa_i^t$, where 

\begin{equation}\label{eq:s}
s_i^t = \min\left\{ 1,  \frac{G_i^t+\mu_i|\kappa_i^t|^2/2}{|\kappa_i^t|^2 (\mu_i+ \|\bm{a}_i\|^2 / \beta )} \right\}.
\end{equation}
$\kappa^t_i$, the $i^{th}$ dual residue at time $t$, and $G_i^t$, the $i^{th}$ coordinate-wise duality gap at time $t$, quantify the sub-optimality along coordinate $i$.
Because of \eqref{eq:s}, the effect of $s^t_i$ is to increase the step size of the update of $x^t_i$ when $G_i^t$ is large. 
The class $\mathcal{H}$ contains also all update rules that decrease the cost function faster than the reference update rule (see two criteria \eqref{eq:H1} and \eqref{eq:H2} in Definition~\ref{def:H} in the supplementary materials.
For example, the update rules in \cite{ST2011} and \cite{JG2017} for Lasso, the update rules in \cite{SZ2013b} for hinge-loss SVM and ridge regression, the update rule in \cite{CQR2015} for the strongly convex functions, in addition to the reference update rule defined above, belong to this class $\mathcal{H}$.
We begin our analysis with a lemma that provides the marginal decrease $r^t_i$ of updating a coordinate $i \in [d]$ according to any update rule in the class $\mathcal{H}$.
\begin{lemma} \label{lem:main_us}
	In \eqref{eq:cost_function}, let $f$ be $1/\beta$-smooth and each $g_i$ be $\mu_i$-strongly convex with convexity parameter $\mu_i\geq 0$ $\forall i \in [d]$. For $\mu_i = 0$, we assume that $g_i$ has a $L$-bounded support. 
	After selecting the coordinate $i\in[d]$ and updating $x_i^t$ with an update rule in $\mathcal{H}$, we have the following guarantee:

	\begin{equation} \label{eq:our_bound}
		F(\bm{x}^{t+1}) \leq F(\bm{x}^{t}) -  r_i^t,~ 
	\end{equation}
	where
	\begin{equation}\label{eq:reward} 
		r^t_i = \left\{
		\begin{array}{ll}
			G_i^t-\frac{\|\bm{a}_i\|^2|\kappa^t_i|^2}{2\beta} & \mbox{if } s_i^t=1 , 
			\ \\	
			\frac{s_i^t\left( G_i^t + \mu_i |\kappa^t_i|^2/2 \right)}{2} \hspace{.2in}
			&  \text{otherwise}. \\
			
		\end{array} \right.
	\end{equation}
\end{lemma}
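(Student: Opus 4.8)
The plan is to reduce everything to the \emph{reference} update rule $x_i^{t+1}=x_i^t+s_i^t\kappa_i^t$: by the definition of the class $\mathcal{H}$ (criteria \eqref{eq:H1} and \eqref{eq:H2}), every rule in $\mathcal{H}$ decreases $F$ by at least as much as the reference rule, so it suffices to prove \eqref{eq:our_bound}--\eqref{eq:reward} for the latter. Since only coordinate $i$ is modified, $A\bm{x}^{t+1}=A\bm{x}^t+s_i^t\kappa_i^t\bm{a}_i$ and $\sum_j g_j(x_j^{t+1})-\sum_j g_j(x_j^t)=g_i(x_i^{t+1})-g_i(x_i^t)$, so I bound the smooth part $f\circ A$ and the separable term $g_i$ separately.

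For the smooth part, $1/\beta$-smoothness of $f$ gives, with $\bm{w}=\nabla f(A\bm{x}^t)$,
\[ f(A\bm{x}^{t+1})\le f(A\bm{x}^t)+s_i^t\kappa_i^t\,\bm{a}_i^\top\bm{w}+\tfrac{(s_i^t)^2|\kappa_i^t|^2\|\bm{a}_i\|^2}{2\beta}. \]
For the separable term, writing $x_i^{t+1}=(1-s_i^t)x_i^t+s_i^t\bar u$ with $\bar u=x_i^t+\kappa_i^t$ and using $\mu_i$-strong convexity of $g_i$ along this segment gives $g_i(x_i^{t+1})-g_i(x_i^t)\le s_i^t\big(g_i(\bar u)-g_i(x_i^t)\big)-\tfrac{\mu_i}{2}s_i^t(1-s_i^t)|\kappa_i^t|^2$. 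The pivotal step is to rewrite $g_i(\bar u)-g_i(x_i^t)$ through duality: since $\bar u\in\partial g_i^\star(-\bm{a}_i^\top\bm{w})$ by the definition of the dual residue, the Fenchel--Young \emph{equality} yields $g_i(\bar u)=-\bar u\,\bm{a}_i^\top\bm{w}-g_i^\star(-\bm{a}_i^\top\bm{w})$; substituting $g_i^\star(-\bm{a}_i^\top\bm{w})=G_i^t-g_i(x_i^t)-x_i^t\bm{a}_i^\top\bm{w}$ from the definition of the coordinate-wise duality gap collapses this to $g_i(\bar u)-g_i(x_i^t)=-G_i^t-\kappa_i^t\bm{a}_i^\top\bm{w}$. (When $\mu_i=0$ this is precisely where the $L$-bounded-support hypothesis is invoked, to guarantee $\partial g_i^\star(-\bm a_i^\top\bm w)$ is nonempty so that $\bar u$ and the Fenchel--Young equality are well defined.)

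Adding the two bounds, the cross terms $\pm s_i^t\kappa_i^t\bm{a}_i^\top\bm{w}$ cancel and I obtain
\[ F(\bm{x}^{t+1})-F(\bm{x}^t)\le -s_i^t\Big(G_i^t+\tfrac{\mu_i}{2}|\kappa_i^t|^2\Big)+\tfrac{(s_i^t)^2}{2}|\kappa_i^t|^2\Big(\mu_i+\tfrac{\|\bm{a}_i\|^2}{\beta}\Big), \]
a quadratic in $s_i^t$ whose unconstrained minimizer, clipped to $[0,1]$, is exactly the $s_i^t$ of \eqref{eq:s}. Plugging it in: in the interior case the right-hand side equals $-\tfrac12 s_i^t\big(G_i^t+\mu_i|\kappa_i^t|^2/2\big)$, and in the boundary case $s_i^t=1$ it equals $-\big(G_i^t-\|\bm a_i\|^2|\kappa_i^t|^2/(2\beta)\big)$ — exactly the two branches of $r_i^t$ in \eqref{eq:reward}. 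I will also record that $G_i^t\ge 0$ by the Fenchel--Young inequality, so $r_i^t$ is a genuine nonnegative decrease. The main obstacle I anticipate is this duality bookkeeping: invoking the Fenchel--Young \emph{equality} at $\bar u$ (not just the inequality) and verifying it survives the non-strongly-convex case $\mu_i=0$ via the bounded-support assumption; the rest (the smoothness expansion, the strong-convexity segment inequality, and the one-variable quadratic minimization matching the two cases of $s_i^t$) is routine algebra.
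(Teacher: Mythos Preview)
Your proposal is correct and follows essentially the same route as the paper. The paper quotes the key inequality \eqref{eq:bound} from Lemma~3.1 of \cite{PCJ2017} and then minimizes it over $s_i^t$, whereas you derive that same inequality from scratch via the smoothness bound on $f$, the strong-convexity segment inequality for $g_i$, and the Fenchel--Young equality at $\bar u$; the resulting quadratic in $s_i^t$ and its optimization are identical. One small caution: your reduction ``every rule in $\mathcal{H}$ decreases $F$ at least as much as the reference rule'' is literally what \eqref{eq:H1} says, but \eqref{eq:H2} only compares the surrogate $\widehat{F}_P$; the paper handles this via the chain $F(h)\le F(\bm x^t)+\widehat{F}_P(\bm x^t,h)\le F(\bm x^t)+\widehat{F}_P(\bm x^t,\widehat{h})\le F(\bm x^t)-r_i^t$, and your own smoothness-plus-convexity computation is precisely a bound on $\widehat{F}_P(\bm x^t,\widehat{h})$, so you already have the needed last step --- just make the \eqref{eq:H2} case explicit.
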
 
In the proof of Lemma~\ref{lem:main_us} in the supplementary materials, the decrease of the cost function is upper-bounded using the smoothness property of $f(\cdot)$ and the convexity of $g_i(\cdot)$ for any update rule in the class $\mathcal{H}$.
\begin{remark}
	In the well-known SGD, the cost function $F(\bm{x}^t)$ might increase at some iterations $t$.
	In contrast, if we use CD with an update rule in $\mathcal{H}$,  it follows from \eqref{eq:reward} and \eqref{eq:s}
	that $r^t_i \geq 0$ for all $t$, and from \eqref{eq:our_bound} that the cost function $F(\bm{x}^t)$ never increases.
	This property  provides a strong stability guarantee, and explains (in part) the good performance observed in the experiments in Section~\ref{sec:experiment}.
\end{remark}
%
%
\subsection{Greedy Algorithms (Full Information Setting)} \label{sec:SDCD}\label{sec:SDCA} \label{sec:full_info}
%
%
In first setting, which we call full information setting, we assume that we have computed $r^t_i$ for all~$i \in [d]$ and all $t$ (we will relax this assumption in Section~\ref{sec:bandit}).
Our first algorithm max\_r makes then a greedy use of Lemma~\ref{lem:main_us}, by simply choosing at time $t$ the coordinate $i$ with the largest $r^t_i$.
\begin{proposition}[max\_r] \label{prop:opt}
	Under the assumptions of Lemma~\ref{lem:main_us}, the optimal coordinate $i_t$ for minimizing the right-hand side of \eqref{eq:our_bound} at time $t$ is 
	$i_t = \argmax_{j\in [d]} r^t_j .$
\end{proposition}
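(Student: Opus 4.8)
The plan is to observe that Proposition~\ref{prop:opt} is an immediate consequence of the structure of the bound established in Lemma~\ref{lem:main_us}. Fix the iteration $t$ and the current iterate $\bm{x}^t$. For any candidate coordinate $j \in [d]$, applying Lemma~\ref{lem:main_us} with $i = j$ shows that if $x_j^t$ is updated with an update rule in $\mathcal{H}$, then the resulting iterate satisfies $F(\bm{x}^{t+1}) \le F(\bm{x}^t) - r_j^t$; hence the right-hand side of \eqref{eq:our_bound} associated with the choice $j$ is the scalar $F(\bm{x}^t) - r_j^t$. The hypotheses of Lemma~\ref{lem:main_us} are assumptions on $f$ and on the functions $g_i$, so they hold independently of which coordinate is selected, and the lemma may therefore be invoked uniformly over all $j \in [d]$.

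Next I would use that $F(\bm{x}^t)$ does not depend on the coordinate being updated: it is a fixed constant at iteration $t$. Since $c - x$ is strictly decreasing in $x$, minimizing $F(\bm{x}^t) - r_j^t$ over $j$ is equivalent to maximizing $r_j^t$ over $j$, i.e.
\[
\argmin_{j\in[d]} \bigl( F(\bm{x}^t) - r_j^t \bigr) = \argmax_{j\in[d]} r_j^t ,
\]
which is exactly the claimed identity $i_t = \argmax_{j\in[d]} r_j^t$. This uses nothing beyond Lemma~\ref{lem:main_us} and the monotonicity just noted.

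I do not anticipate a genuine obstacle here; the only point that needs care is the precise reading of the statement. The proposition concerns minimizing the \emph{guaranteed} bound on $F(\bm{x}^{t+1})$ — namely the right-hand side of \eqref{eq:our_bound} — and not $F(\bm{x}^{t+1})$ itself, since the true minimizer over coordinates would require evaluating the cost after every possible update (precisely the computation that coordinate descent is meant to avoid). Accordingly, I would phrase the argument throughout in terms of the $r_j^t$'s, and I would append a short remark that, by \eqref{eq:reward} and \eqref{eq:s}, one has $r_j^t \ge 0$ for all $j$ (as already observed in the Remark following Lemma~\ref{lem:main_us}), so that the greedy rule $i_t = \argmax_{j} r_j^t$ is in particular guaranteed never to increase the cost function.
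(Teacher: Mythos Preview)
Your proposal is correct and matches the paper's treatment: the paper does not give a separate proof of Proposition~\ref{prop:opt} because it is immediate from the form of the bound in Lemma~\ref{lem:main_us}, and your argument spells out exactly this observation. The additional remark on $r_j^t \ge 0$ is also consistent with the paper's own Remark following Lemma~\ref{lem:main_us}.
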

\begin{remark}
This rule can be seen as an extension of the Gauss-Southwell rule \cite{NSLFK2015} for the class of cost functions that the gradient does not exist, which selects the coordinate whose gradient has the largest magnitude (when $\nabla_i F(\bm{x})$ exits), i.e., $i_t = \argmax_{i \in [d]}|\nabla_i F(\bm{x})|$. Indeed, Lemma~\ref{lem:GS_equal_max_r} in  the supplementary materials shows that for the particular case of $L_2$-regularized cost functions $F(\bm{x})$, the Gauss-Southwell rule and max\_r are equivalent. 
\end{remark}

If functions $g_i(\cdot)$ are strongly convex (i.e., $\mu_i>0$), then max\_r results in a linear convergence rate and matches the lower bound in \cite{AS2016}.
\begin{theorem} \label{thm:linear}
	Let $g_i$ in \eqref{eq:cost_function} be $\mu_i$-strongly convex with $\mu_i>0$ for all $i \in [d]$. 
	Under the assumptions of Lemma~\ref{lem:main_us}, we have the following linear convergence guarantee: 		
	\begin{equation} \label{eq:convergence_gaurantee}
	\epsilon(\bm{x}^{t}) \leq
	\epsilon(\bm x^0) \prod_{l=1}^t \left(1 -  \max_{i\in [d]} \frac{G_i(\bm{x}^t)\mu_i}{G(\bm{x}^t)\left( \mu_i + \frac{\|\bm{a}_i\|^2}{\beta} \right)} \right),
	\end{equation}
	for all $t>0$, where $\epsilon(\bm{x}^{0})$ is the sub-optimality gap at  $t=0$. 
\end{theorem}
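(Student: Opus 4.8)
The plan is to upgrade the one–step guarantee of Lemma~\ref{lem:main_us} into a contraction on the sub‑optimality gap and then unroll it. Since \texttt{max\_r} selects $i_t=\argmax_{j\in[d]}r^t_j$ (Proposition~\ref{prop:opt}), applying \eqref{eq:our_bound} at the chosen coordinate and subtracting $F(\bar{\bm x})$ from both sides gives
\[
\epsilon(\bm{x}^{t+1})\ \le\ \epsilon(\bm{x}^t)-\max_{i\in[d]}r_i^t .
\]
So the whole theorem reduces to lower‑bounding $\max_i r_i^t$ by a suitable multiple of $\epsilon(\bm{x}^t)$.

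The key step is the coordinate‑wise estimate
\[
r_i^t\ \ge\ \frac{\mu_i\,G_i^t}{\mu_i+\|\bm{a}_i\|^2/\beta}\qquad\text{for every } i\in[d].
\]
I would obtain this by a short case analysis on the two branches of \eqref{eq:reward}. In the branch $s_i^t<1$ one substitutes $s_i^t$ from \eqref{eq:s} into $r_i^t=\tfrac{s_i^t(G_i^t+\mu_i|\kappa_i^t|^2/2)}{2}$; the claimed inequality then becomes $\big(G_i^t+\mu_i|\kappa_i^t|^2/2\big)^2\ge 2\mu_iG_i^t|\kappa_i^t|^2$, which is just $\big(G_i^t-\mu_i|\kappa_i^t|^2/2\big)^2\ge 0$. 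In the branch $s_i^t=1$ one uses that $s_i^t=1$ is equivalent to $G_i^t\ge |\kappa_i^t|^2\big(\mu_i/2+\|\bm{a}_i\|^2/\beta\big)$, which, plugged into $r_i^t=G_i^t-\tfrac{\|\bm{a}_i\|^2|\kappa_i^t|^2}{2\beta}$, yields the bound after elementary manipulation. (Conceptually, both branches are instances of the same fact: $r_i^t$ equals the maximum over $s\in[0,1]$ of the concave quadratic $s(G_i^t+\mu_i|\kappa_i^t|^2/2)-\tfrac{s^2}{2}|\kappa_i^t|^2(\mu_i+\|\bm{a}_i\|^2/\beta)$ appearing in the proof of Lemma~\ref{lem:main_us}, so it dominates the value of that quadratic at the admissible point $s=\mu_i/(\mu_i+\|\bm{a}_i\|^2/\beta)$, at which the $|\kappa_i^t|^2$ terms cancel and leave exactly $\mu_iG_i^t/(\mu_i+\|\bm{a}_i\|^2/\beta)$.)

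Taking the maximum over $i$ and using $G(\bm{x}^t)=\sum_iG_i^t$,
\[
\max_{i\in[d]}r_i^t\ \ge\ \max_{i\in[d]}\frac{\mu_iG_i^t}{\mu_i+\|\bm{a}_i\|^2/\beta}\ =\ G(\bm{x}^t)\max_{i\in[d]}\frac{G_i(\bm{x}^t)\,\mu_i}{G(\bm{x}^t)\big(\mu_i+\|\bm{a}_i\|^2/\beta\big)} .
\]
Each $G_i^t\ge 0$ (Fenchel–Young) and $\mu_i>0$, so the maximum on the right is nonnegative; since moreover $G(\bm{x}^t)\ge\epsilon(\bm{x}^t)\ge 0$ (the duality gap dominates the sub‑optimality gap), we may replace $G(\bm{x}^t)$ by $\epsilon(\bm{x}^t)$ without increasing the right‑hand side. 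Combining with the first display gives the one‑step contraction
\[
\epsilon(\bm{x}^{t+1})\ \le\ \epsilon(\bm{x}^t)\Big(1-\max_{i\in[d]}\frac{G_i(\bm{x}^t)\,\mu_i}{G(\bm{x}^t)\big(\mu_i+\|\bm{a}_i\|^2/\beta\big)}\Big),
\]
and unrolling it from $t=0$ (so the $l$‑th factor carries $\bm{x}^{l-1}$) yields \eqref{eq:convergence_gaurantee}.

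I do not expect a serious obstacle here: the bulk of the work is already in Lemma~\ref{lem:main_us} and Proposition~\ref{prop:opt}. The only points requiring a little care are (i) extracting the clean coordinate‑wise bound $r_i^t\ge \mu_iG_i^t/(\mu_i+\|\bm{a}_i\|^2/\beta)$ uniformly across the two branches of \eqref{eq:reward}, and (ii) confirming that the per‑step factor is a genuine contraction, i.e.\ lies in $[0,1]$ — which follows since $\mu_iG_i^t/(\mu_i+\|\bm{a}_i\|^2/\beta)\le G_i^t\le G(\bm{x}^t)$ — and relating the $\max$ over coordinates to the global duality gap so that the final rate is expressed in the form stated.
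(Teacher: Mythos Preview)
Your proposal is correct and follows essentially the same approach as the paper: establish the coordinate-wise lower bound $r_i^t\ge \mu_iG_i^t/(\mu_i+\|\bm a_i\|^2/\beta)$ by case analysis on the two branches of \eqref{eq:reward}, combine with weak duality $G(\bm x^t)\ge\epsilon(\bm x^t)$ to get the one-step contraction, and unroll. The only cosmetic difference is that the paper argues the bound for the single coordinate $i^\star$ maximizing $\mu_iG_i/(\mu_i+\|\bm a_i\|^2/\beta)$ and then invokes Proposition~\ref{prop:opt} to transfer it to \texttt{max\_r}, whereas you prove it for all $i$ and take the max directly; your additional ``evaluate the quadratic at $s=\mu_i/(\mu_i+\|\bm a_i\|^2/\beta)$'' remark is a clean unification the paper does not make explicit.
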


Now, if functions $g_i(\cdot)$ are not necessary strongly convex (i.e., $\mu_i=0$), max\_r is also very effective and outperforms the state-of-the-art.
\begin{theorem} \label{thm:main}
	Under the assumptions of Lemma~\ref{lem:main_us}, let $\mu_i \geq 0$ for all $i\in [d]$. Then, 
	\begin{equation} \label{eq:convergence}
		\epsilon(\bm{x}^{t}) \leq  \frac{ 8L^2 \eta^2/ \beta}{2d+t - t_0}
	\end{equation}
	for all $t\geq t_0$, where $t_0 = \max \{1,2d \log \nicefrac{d\beta \epsilon(\bm{x}^0)}{4L^2\eta^2} \}$, $\epsilon(\bm{x}^{0})$ is the sub-optimality gap at  $t=0$ and $\eta = O(d)$ is an upper bound on
	$\min_{i\in[d]} \nicefrac{G(\bm{x}^t)~\|\bm{a}_{i}\|}{G_{i}(\bm{x}^t)}$ for all iterations $l \in [t]$.
\end{theorem}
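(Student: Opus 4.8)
The plan is to convert the one-step guarantee of Lemma~\ref{lem:main_us} together with the greedy rule of Proposition~\ref{prop:opt} into a recursion for the sub-optimality gap $\epsilon(\bm{x}^t)$, and then solve that recursion in two phases: a short geometric phase of length $t_0$, followed by the $O(1/t)$ phase that yields \eqref{eq:convergence}. For the one-step recursion: because max\_r selects $i_t=\argmax_{j\in[d]}r_j^t$, subtracting $F(\bar{\bm{x}})$ from both sides of \eqref{eq:our_bound} gives $\epsilon(\bm{x}^{t+1})\le\epsilon(\bm{x}^t)-\max_{i\in[d]}r_i^t$. Since $r_i^t\ge0$ (the Remark after Lemma~\ref{lem:main_us}), $\epsilon(\bm{x}^t)$ is non-increasing, so $\eta$ can be treated as a constant along the trajectory.

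The core step is lower bounding $\max_i r_i^t$. First I would control the dual residues: since each $g_i$ has $L$-bounded support and the reference update makes $x_i^{t+1}$ a convex combination of $x_i^t$ and an element of $\partial g_i^\star(-\bm{a}_i^\top\bm{w})$ (both of magnitude at most $L$), induction gives $|\kappa_i^t|\le 2L$ for all $i$ and $t$. Plugging this bound and \eqref{eq:s} with $\mu_i=0$ into \eqref{eq:reward}, every coordinate satisfies $r_i^t\ge\frac12\min\bigl\{G_i^t,\ \beta(G_i^t)^2/(4L^2\|\bm{a}_i\|^2)\bigr\}$, the first argument being $r_i^t$ when $s_i^t=1$ and the second when $s_i^t<1$. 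Now take a coordinate $i^\star$ attaining $\min_{i}G(\bm{x}^t)\|\bm{a}_i\|/G_i(\bm{x}^t)\le\eta$, so that $\|\bm{a}_{i^\star}\|\le\eta\,G_{i^\star}^t/G(\bm{x}^t)$. When $s_{i^\star}^t<1$ this gives $\max_i r_i^t\ge r_{i^\star}^t\ge \beta\,G(\bm{x}^t)^2/(8L^2\eta^2)\ge \beta\,\epsilon(\bm{x}^t)^2/(8L^2\eta^2)$, using $G(\bm{x}^t)\ge\epsilon(\bm{x}^t)$; when $s_{i^\star}^t=1$ (which will force $\epsilon(\bm{x}^t)$ to still be large) I would instead lower bound $\max_i r_i^t$ by a quantity of order $\epsilon(\bm{x}^t)/d$, using $r_{i^\star}^t\ge G_{i^\star}^t/2$, the pigeonhole bound $\max_i G_i^t\ge G(\bm{x}^t)/d$, and the definition of $\eta$.

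For the two-phase analysis, set $c=\beta/(8L^2\eta^2)$ and let $\Delta=4L^2\eta^2/(\beta d)=1/(2cd)$ be the crossover level. As long as $\epsilon(\bm{x}^t)>\Delta$, the previous step yields at least a geometric decrease $\epsilon(\bm{x}^{t+1})\le(1-\frac1{2d})\epsilon(\bm{x}^t)$: in the $s_{i^\star}^t<1$ sub-case because $c\,\epsilon(\bm{x}^t)^2>\epsilon(\bm{x}^t)/(2d)$ precisely when $\epsilon(\bm{x}^t)>\Delta$, and in the $s_{i^\star}^t=1$ sub-case by the order-$\epsilon(\bm{x}^t)/d$ estimate. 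Since $-\log(1-\frac1{2d})\ge\frac1{2d}$, after $t_0=\max\{1,\,2d\log(d\beta\epsilon(\bm{x}^0)/(4L^2\eta^2))\}$ iterations one has $\epsilon(\bm{x}^{t_0})\le\Delta$. For $t\ge t_0$ the iterates stay in $\epsilon(\bm{x}^t)\le\Delta$, where $\epsilon(\bm{x}^{t+1})\le\epsilon(\bm{x}^t)-c\,\epsilon(\bm{x}^t)^2$. Dividing by $\epsilon(\bm{x}^t)\epsilon(\bm{x}^{t+1})$ and using $\epsilon(\bm{x}^{t+1})\le\epsilon(\bm{x}^t)$ gives $1/\epsilon(\bm{x}^{t+1})\ge 1/\epsilon(\bm{x}^t)+c$; telescoping from $t_0$ and using $1/\epsilon(\bm{x}^{t_0})\ge 1/\Delta=2cd$ yields $1/\epsilon(\bm{x}^t)\ge c(2d+t-t_0)$, which is \eqref{eq:convergence}.

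The delicate part is the lower bound on $\max_i r_i^t$, and within it the $s_{i^\star}^t=1$ branch: making the case split (and, there, the choice of the auxiliary coordinate with largest $G_i^t$) yield a single clean recursion at only constant-factor loss is exactly what pins down the constants $8$ and $2d$ in the statement. Establishing $|\kappa_i^t|\le 2L$ uniformly from the bounded-support hypothesis and checking that the $\eta$-coordinate is usable (e.g.\ $\|\bm{a}_{i^\star}\|>0$) are the supporting technical points.
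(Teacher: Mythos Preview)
Your proposal is correct and follows essentially the same two-phase strategy as the paper: a one-step lower bound on $\max_i r_i^t$ obtained by case-splitting on $s_{i^\star}^t$ (together with $|\kappa_i^t|\le 2L$), yielding a geometric contraction $(1-\tfrac{1}{2d})$ until the threshold $\Delta=4L^2\eta^2/(\beta d)$ is reached, followed by the quadratic recursion $\epsilon(\bm{x}^{t+1})\le\epsilon(\bm{x}^t)-c\,\epsilon(\bm{x}^t)^2$ that gives the $O(1/t)$ rate.

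Two minor methodological differences are worth noting. First, the paper fixes $i^\star=\argmax_i G_i(\bm{x}^t)$ rather than your $\eta$-attaining coordinate; with that choice the $s_{i^\star}^t=1$ branch is immediate via pigeonhole, $r_{i^\star}^t\ge G_{i^\star}^t/2\ge G(\bm{x}^t)/(2d)$, and the appeal to $\eta$ is pushed entirely into the $s_{i^\star}^t<1$ branch. In other words, the ``delicate'' point you correctly flag is exactly the one the paper sidesteps by choosing the max-$G_i$ coordinate as $i^\star$ from the outset (and conversely, your choice of $i^\star$ makes the $s_{i^\star}^t<1$ branch the clean one). Second, for the sublinear phase the paper verifies \eqref{eq:convergence} by induction on $t$ (separately checking that the inductive hypothesis propagates in each of the two cases), whereas your reciprocal-telescoping argument $1/\epsilon(\bm{x}^{t+1})\ge 1/\epsilon(\bm{x}^t)+c$ combined with $1/\epsilon(\bm{x}^{t_0})\ge 2cd$ is a somewhat cleaner route to the same bound. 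The paper also cites an existing lemma for $|\kappa_i^t|\le 2L$ rather than arguing it directly from bounded support.
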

To make the convergence bounds \eqref{eq:convergence_gaurantee} and \eqref{eq:convergence} easier to understand, assume that $\mu_i=\mu_1$ and that the data is normalized, so that $\|\bm{a}_i\|= 1 $ for all $i\in [d]$.
First, by letting $\eta = O(d)$ be an upper bound on $\min_{i\in[d]} \nicefrac{G(\bm{x}^t)}{G_{i}(\bm{x}^t)}$ for all iterations $l \in [t]$,
Theorem~\ref{thm:linear} results in a linear convergence rate, i.e., $\epsilon(\bm{x}^{t}) = O\left(  \exp(-c_1 t/\eta )\right)$  for some constant $c_1 > 0$ that depends on $\mu_1$ and $\beta$, whereas Theorem~\ref{thm:main}  provides a sublinear convergence guarantee, i.e., $\epsilon(\bm{x}^{t}) = O\left( \eta^2/t \right)$. 

Second, note that in both convergence guarantees, we would like to have a small $\eta$. The ratio $\eta$ can be as large as $d$, when the different coordinate-wise gaps $G_i(\bm{x}^t)$ are equal. In this case, non-uniform sampling does not bring any advantage over uniform sampling, as expected. 
In contrast, if for instance $c \cdot G(\bm{x}^t) \leq \max_{i \in [d]}G_{i}(\bm{x}^t)$ for some constant $\nicefrac{1}{d} \leq c \leq 1$, then choosing the coordinate with the largest $r_i^t$ results in a decrease in the cost function, that is $1\leq c \cdot d$ times larger compared to uniform sampling. 
Theorems~\ref{thm:linear} and \ref{thm:main} are proven in the supplementary materials.

Finally, let us compare the bound of max\_r given in Theorem~\ref{thm:main} with the state-of-the-art bounds of ada\_gap in Theorem~3.7 of \cite{PCJ2017} and of CD algorithm in Theorem~2 of \cite{DPJ2017}. For the sake of simplicity, assume that $\|\bm{a}_i\|=1$ for all $i\in [d]$.
When $c \cdot G(\bm{x}^t) \leq \max_{i \in [d]} G_{i}(\bm{x}^t)$ and some constant $\nicefrac{1}{d} \leq c \leq 1$,
the convergence guarantee for ada\_gap is $\E \left[\epsilon(\bm{x}^{t}) \right] = O \left( \nicefrac{\sqrt{d} L^2 }{\beta (c^2+1/d)^{3/2} (2d+t)} \right)$ and the convergence guarantee of the CD algorithm in \cite{DPJ2017} is $\E \left[\epsilon(\bm{x}^{t}) \right] = O \left( \nicefrac{d L^2 }{\beta c (2d+t)} \right)$,
which are much tighter than the convergence guarantee of CD with uniform sampling
	$\E \left[\epsilon(\bm{x}^{t}) \right] = O \left(  \nicefrac{d^2L^2}{\beta(2d+t)} \right)$.
In contrast, the convergence guarantee of max\_r is 
$	\epsilon(\bm{x}^{t})  = O \left( \nicefrac{L^2}{\beta c^2(2d+t)} \right)$,
which is $\sqrt{d}/c$ times better than ada\_gap, $dc$ times better than the CD algorithm in \cite{DPJ2017} and $c^2 d^2$ times better than uniform sampling for the same constant $c\geq\nicefrac{1}{d}$.
%
\vspace{-0.2em} 
\begin{remark}
	There is no randomness in the selection rule used in max\_r (beyond tie breaking), hence the convergence results given in Theorems~\ref{thm:linear} and \ref{thm:main} a.s. hold for all $t$.
\end{remark}

\subsection{Bandit Algorithms (Partial Information Setting)}  \label{sec:bandit}
%
State-of-the-art algorithms and max\_r require knowing a sub-optimality metric (e.g., $G_i^t$ in \cite{PCJ2017,DPJ2017}, the norm of gradient $\nabla_i F(\bm{x}^t)$ in \cite{NSLFK2015}, the marginal decreases $r_i^t$ in this work) for all coordinates $i \in [d]$, which can be computationally expensive if the number of coordinates $d$ is large. To overcome this problem, we use a novel approach inspired by the bandit framework that \emph{learns} the best coordinates over time from the partial information it receives during the training.

\textbf{Multi-armed Bandit:} In a multi-armed bandit (MAB) problem, there are $d$ possible arms (which are here the coordinates) that a bandit algorithm can choose from for a reward ($r_i^t$ in this work) at time~$t$. 
The goal of the MAB is to maximize the cumulative rewards that it receives over $T$ rounds (i.e., $\sum_{i=1}^T r_{i_t}^t$, where $i_t$ is the arm (coordinate) selected at time $t$). 
After each round, the MAB only observes the reward of the selected arm $i_t$, and hence has only access to \emph{partial information}, which it then uses to refine its arm (coordinate) selection strategy for the next round. 

\begin{wrapfigure}{L}{0.55\textwidth}
	\begin{minipage}{0.525\textwidth}
		\vspace{-1em}
		\begin{algorithm}[H]
			\caption{B\_max\_r}
			\begin{algorithmic} \label{alg:Bandit}
				\STATE \textbf{input: }  $x^0$, $\varepsilon$ and $E$
				\STATE \textbf{initialize: }   set $\bar{r}^0_i = r^0_i$ for all $i \in [d]$
				\FOR{$t=1$ {\bfseries to} $T$}
				\IF {$t \mod E == 0$}
				\STATE  set $\bar{r}_i^t = r^t_i$ for all $i \in [d]$
				\ENDIF
				\STATE Generate $K \sim Bern(\varepsilon)$ 
				\IF {$K==1$}
				\STATE  Select $i_t \in [d]$ uniformly at random
				\ELSE
				\STATE	Select $i_t = \argmax_{i\in [d]} \bar{r}_{i}^{t}$
				\ENDIF
				\STATE	Update $x_{i_t}^{t}$ by an update rule in $\mathcal{H}$
				\STATE	Set $\bar{r}_{i_t}^{t+1} = r^{t+1}_{i_t}$ and $\bar{r}_{i}^{t+1} = \bar{r}_{i}^{t}$ for all $i\neq i_t$
				\ENDFOR
			\end{algorithmic}
		\end{algorithm}
		\vspace{-1.5em}
	\end{minipage}
\end{wrapfigure}
In our second algorithm B\_max\_r, the marginal decreases $r_i^t$ computed for all $i \in [d]$ at each round~$t$ by max\_r are replaced by estimates $\bar{r}_i$ computed by an MAB as follows. First, time is divided into bins of size $E$. At the beginning of a bin~$t_e$, the marginal decreases $r^{t_e}_i$ of all coordinates $i \in [d]$ are computed, and the estimates are set to these values ($\bar{r}_i^t = r^{t_e}_i$ for all $i\in [d]$). At each iteration $t_e \leq t \leq t_e + E$ within that bin, with probability $\varepsilon$ a coordinate $i_t \in [d]$ is selected uniformly at random, and otherwise (with probability $(1-\varepsilon)$) the coordinate with the largest $\bar{r}_i^t$ is selected. 
%
Coordinate $i_t$ is next updated, as well as the estimate of the marginal decrease $\bar{r}_{i_t}^{t+1} = r^{t+1}_{i_t}$, whereas the other estimates  $\bar{r}_j^{t+1}$ remain unchanged for $j\neq {i_t}$.
The algorithm can be seen as a modified version of $\varepsilon$-greedy (see \cite{ACFS2002}) that is developed for the setting where the reward of arms follow a fixed probability distribution,
$\varepsilon$-greedy uses the empirical mean of the observed rewards as an estimate of the rewards. 
%
In contrast, in our setting, the rewards do not follow such a fixed probability distribution and the most recently observed reward is the best estimate of the reward that we could have.
In B\_max\_r, we choose $E$ not too large and $\varepsilon$ large enough such that every arm (coordinate) is sampled often enough to maintain an accurate estimate of the rewards $r^t_i$  (we use $E=O(d)$ and $\varepsilon=\nicefrac{1}{2}$ in the experiments of Section \ref{sec:experiment}).

The next proposition shows the effect of the estimation error on the convergence rate.

\begin{proposition} \label{thm:main_bandit}
	Consider the same assumptions as Lemma~\ref{lem:main_us} and Theorem~\ref{thm:main}.
	For simplicity, let $\|\bm{a}_i\|=\|\bm{a}_1\|$ for all $i\in [d]$ and  $\epsilon(\bm{x}^0) \leq \sqrt{\nicefrac{2\alpha L^2  \|\bm{a}_1\|^2}{\beta} \left( \nicefrac{\varepsilon}{d} + \nicefrac{1-\varepsilon}{c} \right)} = O(d)$.%
	\footnote{These assumptions are not necessary but they make the analysis simpler. For example, even if $\epsilon(\bm{x}^0)$ does not satisfy the required condition, we can scale down $F(\bm{x})$ by $m$ so that $F(\bm{x})/m$ is minimized. The new sub-optimality gap becomes $\epsilon(\bm{x}^0)/m$, and for a sufficiently large $m$ the initial condition is satisfied.} 
	Let $j_\star^t = \argmax_{i \in [d]} \bar{r}_i^t$.
	If $\max_{i \in [d]} r^t_i / r^t_{j_\star^t} \leq c(E,\varepsilon)$ for some finite constant $c = c(E,\varepsilon)$, then by using B\_max\_r (with bin size $E$ and exploration parameter $\varepsilon$) we have 
	\begin{equation} \label{eq:convergence_prop} 
	\E \left[\epsilon(\bm{x}^{t}) \right] \leq   \frac{\alpha}{2+t-t_0}, \hspace{0.2em} \text{ where  } \hspace{0.3em}
	\alpha= \frac{8L^2\|\bm{a}_1\|^2}{\beta\left(\nicefrac{\varepsilon}{d^2} + \nicefrac{(1-\varepsilon)}{\eta^2c}\right)},
	\end{equation}
	for all $t\geq t_0 = \max \left\{1, \nicefrac{4 \epsilon(\bm{x}^{0})}{\alpha} \log(\nicefrac{2 \epsilon(\bm{x}^{0})}{\alpha}) \right\} = O(d)$ and where  $\eta$ is an upper bound on	$\min_{i\in[d]}	\nicefrac{G(\bm{x}^t)}{G_{i}(\bm{x}^t)}$ for iterations $l \in [t]$.
	%
\end{proposition}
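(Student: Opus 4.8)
The plan is to follow the template of the proof of Theorem~\ref{thm:main}, replacing the deterministic per-step decrease of max\_r by the \emph{expected} decrease of the randomized rule of B\_max\_r, and then solving the resulting scalar recursion. Let $\mathcal{F}_t$ denote the $\sigma$-algebra generated by the first $t$ iterations, so that $\bm{x}^t$, all true marginal decreases $r_i^t$, all estimates $\bar{r}_i^t$, and hence $j_\star^t=\argmax_{i\in[d]}\bar{r}_i^t$ are $\mathcal{F}_t$-measurable. Conditioned on $\mathcal{F}_t$, B\_max\_r selects a uniformly random coordinate with probability $\varepsilon$ and the coordinate $j_\star^t$ with probability $1-\varepsilon$, so inequality \eqref{eq:our_bound} of Lemma~\ref{lem:main_us} gives
\[
\E\!\left[\epsilon(\bm{x}^{t+1})\mid\mathcal{F}_t\right]\;\le\;\epsilon(\bm{x}^{t})\;-\;\Big(\tfrac{\varepsilon}{d}\textstyle\sum_{i=1}^{d}r_i^t\;+\;(1-\varepsilon)\,r_{j_\star^t}^t\Big).
\]

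The next step is to lower bound the bracketed term by $G(\bm{x}^t)^2/\alpha$. The key ingredient, also used in the proof of Theorem~\ref{thm:main}, is the elementary inequality relating $r_i^t$ to the coordinate-wise duality gap: from the two branches of \eqref{eq:reward} and the bound $|\kappa_i^t|\le 2L$ afforded by the $L$-bounded support of $g_i$, one gets $r_i^t\ge \beta(G_i^t)^2/(8L^2\|\bm{a}_i\|^2)$ whenever $G_i^t\le 4L^2\|\bm{a}_i\|^2/\beta$ (and $r_i^t\ge G_i^t/2$ otherwise, which drives the geometric transient discussed below). Plugging this into the uniform term via Cauchy--Schwarz gives $\sum_i r_i^t\ge \tfrac{\beta}{8L^2\|\bm{a}_1\|^2}\sum_i(G_i^t)^2\ge\tfrac{\beta}{8L^2\|\bm{a}_1\|^2}\cdot\tfrac{G(\bm{x}^t)^2}{d}$, while for the greedy term I would use $\max_i G_i^t\ge G(\bm{x}^t)/\eta$ together with the hypothesis $\max_i r_i^t/r_{j_\star^t}^t\le c$ to get $r_{j_\star^t}^t\ge\tfrac{1}{c}\max_i r_i^t\ge\tfrac{\beta}{8cL^2\|\bm{a}_1\|^2}\cdot\tfrac{G(\bm{x}^t)^2}{\eta^2}$. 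Combining the two and using $G(\bm{x}^t)\ge\epsilon(\bm{x}^t)$ gives, with $\alpha=8L^2\|\bm{a}_1\|^2\big/\big(\beta(\varepsilon/d^2+(1-\varepsilon)/(\eta^2 c))\big)$ exactly as in the statement,
\[
\E\!\left[\epsilon(\bm{x}^{t+1})\mid\mathcal{F}_t\right]\;\le\;\epsilon(\bm{x}^{t})-\frac{\epsilon(\bm{x}^{t})^2}{\alpha}.
\]

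Taking total expectations and applying Jensen's inequality, $\E[\epsilon(\bm{x}^t)^2]\ge(\E[\epsilon(\bm{x}^t)])^2$, yields the scalar recursion $\delta_{t+1}\le\delta_t-\delta_t^2/\alpha$ for $\delta_t:=\E[\epsilon(\bm{x}^t)]$. I would then finish as in the proof of Theorem~\ref{thm:main}: there is an initial transient during which the decrease is only geometric (for large $\epsilon(\bm{x}^t)$ some coordinates fall in the linear branch of \eqref{eq:reward}), lasting at most $t_0=O(d)$ iterations --- this is where the assumed bound on $\epsilon(\bm{x}^0)$, or in its absence the rescaling of $F$ described in the footnote, is used; after $t_0$ the quantity $\delta_{t_0}$ is small enough that an induction on $t$ gives $\delta_t\le\alpha/(2+t-t_0)$, the inductive step reducing to the numerical inequality $(1+k)(3+k)\le(2+k)^2$ with $k=t-t_0$.

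The main obstacle I anticipate is not the probabilistic bookkeeping, which is routine once one conditions on $\mathcal{F}_t$, but pinning down the constant $\alpha$ with the exact coefficients of the statement and, relatedly, controlling the transient: one must carry the $r_i^t$-versus-$(G_i^t)^2$ inequality through both branches of \eqref{eq:reward} while tracking the $\|\bm{a}_i\|$ and $|\kappa_i^t|$ factors, and show that the linear-branch iterations contribute only the additive $O(d)$ term $t_0$ and do not degrade the eventual $1/t$ rate --- which is exactly the role of the hypothesis $\epsilon(\bm{x}^0)=O(d)$.
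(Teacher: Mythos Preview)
Your plan is essentially the paper's own proof: condition on the past, split the expected decrease into the uniform and greedy parts, lower-bound each via $r_i^t\ge \beta(G_i^t)^2/(8L^2\|\bm a_1\|^2)$ together with Cauchy--Schwarz and the definitions of $\eta$ and $c$, then apply Jensen and solve the quadratic recursion $\delta_{t+1}\le\delta_t-\delta_t^2/\alpha$ by induction. The only place your description drifts from the paper is the role of the hypothesis on $\epsilon(\bm x^0)$ and of $t_0$: the paper does not use the hypothesis to bound the length of a geometric transient; rather, it uses it to guarantee that in the case $s_{i^\star}^t=1$ the constant decrease $\tfrac{2L^2\|\bm a_1\|^2}{\beta}(\varepsilon/d+(1-\varepsilon)/c)$ already dominates $\epsilon(\bm x^t)^2/\alpha$, so that the quadratic recursion holds from $t=0$ onward; the burn-in $t_0$ is then only what the quadratic recursion itself needs to bring $\delta_{t_0}$ below $\alpha/2$ so that the map $y\mapsto y(1-y/\alpha)$ is monotone and the induction step goes through.
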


\par{\emph{What is the effect of $c(E,\varepsilon)$?}}
In Proposition~\ref{thm:main_bandit}, $c=c(E,\varepsilon)$ upper bounds the estimation error of the marginal decreases $r^t_i$. 
%
%
%
To make the effect of $c(E,\varepsilon)$ on the convergence bound \eqref{eq:convergence_prop} easier to understand, let $\varepsilon=\nicefrac{1}{2}$, then
$\alpha \sim \nicefrac{1}{\left(\nicefrac{1}{d^2} + \nicefrac{1}{\eta^2c}\right)}$.
We can see from the convergence bound \eqref{eq:convergence_prop} and the value of $\alpha$ that if $c$ is large, the convergence rate is proportional to $d^2$ similarly to uniform sampling (i.e., $\epsilon(\bm{x}^t) \in O(\nicefrac{d^2}{t})$). 
Otherwise, if $c$ is small, the convergence rate is similar to max\_r  ($\epsilon(\bm{x}^t) \in O(\nicefrac{\eta^2}{t})$, see Theorem~\ref{thm:main}).
%
%

\emph{How to control $c=c(E,\varepsilon)$?}
%
We can control the value of $c$ by varying the bin size $E$.
Doing so, there is a trade-off between the value of $c$ and the average computational cost of an iteration.
On the one hand, if we set the bin size to $E=1$ (i.e., full information setting), then $c=1$ and B\_max\_r boils down to max\_r, while the average computational cost of an iteration is $O(nd)$.
On the other hand, if $E  > 1$ (i.e., partial information setting), then $c \geq 1$, 
while the average computational complexity of an iteration is $O(\nicefrac{nd}{E})$. 
In our experiments, we find that by setting $\nicefrac{d}{2}\leq E \leq d$, B\_max\_r converges faster than uniform sampling (and other state-of-the-art methods) while the average computational cost of an iteration is $O(n+ \log d)$, similarly to the computational cost of an iteration of CD with uniform sampling ($O(n)$), see Figures~\ref{fig:SDCA} and \ref{fig:regression}.
We also find that any exploration parameter $\varepsilon\in[0.2,0.7]$ in B\_max\_r works reasonably well.
The proof of Proposition~\ref{thm:main_bandit} is similar to the proof of Theorem~\ref{thm:main} and is given in the supplementary materials.

	\begin{table}
	\vspace*{-2em}
	\caption{The shaded rows correspond to the algorithms introduced in this paper.
		$\bar{z}$ denotes the number of non-zero entries of the data matrix $A$.
		The numbers below the column dataset\slash cost are the clock time (in seconds) needed for the algorithms to reach a sub-optimality gap of $\epsilon(\bm{x}^t)= \exp{(-5)}$.
	}
	\centering
	\vspace{.05in}
	\hspace*{-.15in}
	\begin{tabular}{ c|c|c c c } 

		\textbf{method} & \textbf{computational cost} & &\textbf{dataset\slash cost} \\  
		\cline{3-5}
		 & \textbf{(per epoch)} & aloi\slash Lasso & a9a\slash log reg & usps\slash ridge reg\\  
		\hline
		uniform & $O(\bar{z})$ & 27.8 & 11.8 & 1\\ 
		ada\_gap & $O(d\cdot \bar{z})$ & 52.8 & 42.4 &88 \\
		\rowcolor{Gray}
		max\_r  & $O(d\cdot \bar{z})$ & 6.2 & 4.5 & 9.5\\
		%
		gap\_per\_epoch & $O(\bar{z} + d\log d)$ & 75 & 11.1 & 300   \\
		Approx & $O(\bar{z} + d\log d)$ & 16.3 & 2.3 & -\\
		NUACDM & $O(\bar{z} + d\log d)$ & - & - & 6\\
		\rowcolor{Gray}
		B\_max\_r & $O(\bar{z}+ d\log d)$ & 11 &1.9 & 1 \\
		\bottomrule  
	\end{tabular}
	\label{table:computation}
	\end{table}

\vspace{-0.75em}
\section{Related Work} 
\vspace{-0.75em}
Non-uniform coordinate selection has been proposed first for constant (non-adaptive) probability distributions $p$ over $[d]$.
In \cite{ZZ2014}, $p_i$ is proportional to the Lipschitz constant of $g^\star_i$.
Similar distributions are used in \cite{AQRY2016,ZG2016} for strongly convex $f$ in \eqref{eq:cost_function}. 

%
%

Time varying (adaptive) distributions, such as $p^t_i = |\kappa^t_i|/(\sum_{j=1}^{d}|\kappa^t_j|)$ \cite{CQR2015}, and
$p^t_i = G_i(\bm{x}^t)/G(\bm{x}^t)$ \cite{PCJ2017,OALDL2016}, have also been considered.
In all these cases, the full information setting is used, which requires the computation of the distribution $p^t$ ($\Omega(nd)$ calculations) at each step.
To bypass this problem, heuristics are often used; e.g., $p^t$ is calculated once at the beginning of an epoch of length $E$ and is left unchanged throughout the remainder of that epoch. This heuristic approach does not work well in a scenario where $G_i(\bm{x}^t)$ varies significantly. 
In \cite{DPJ2017} a similar idea to max\_r is used with $r_i$ replaced by $G_i$, but only in the full information setting. Because of the update rule used in \cite{DPJ2017}, the convergence rate is $O\left(d \cdot\max G_i(\bm{x}^t)/G(\bm{x}^t ) \right)$ times slower than Theorem~\ref{thm:main} (see also the comparison at the end of Section~\ref{sec:full_info}).
The Gauss-Southwell rule (GS) is another coordinate selection strategy for smooth cost functions \cite{STXY2016} and its convergence is studied in \cite{NSLFK2015} and \cite{SRJ2017}. GS selects the coordinate to update as the one that maximizes $|\nabla_i F(\bm x^t)|$ at time $t$.
max\_r can be seen as an extension of GS to a broader class of  cost functions (see Lemma~\ref{lem:GS_equal_max_r} in the supplementary materials). Furthermore, when only sub-gradients are defined for $g_i(\cdot)$, GS needs to solve a proximal problem. 
To address the computational tractability of GS, in \cite{SRJ2017}, lower and upper bounds on the gradients are computed (instead of computing the gradient itself) and used for selecting the coordinates, but these lower and upper bounds might be loose and/or difficult to find. 
	%
For example, without a heavy pre-processing of the data, ASCD in \cite{SRJ2017} converges with the same rate as uniform sampling when the data is normalized and $f(A\bm{x}) = \|A\bm{x}-\bm{Y}\|^2$.

%
In contrast, our principled approach leverages a bandit algorithm to learn a good estimate of $r^t_i$; this allows for theoretical guarantees and outperforms the state-of-the-art methods, as we will see in Section~\ref{sec:experiment}. 
Furthermore, our approach does not require the cost function to be strongly convex (contrary to e.g., \cite{CQR2015,NSLFK2015})
%

%
%
%
%
%

\newpage
Bandit approaches have very recently been used to accelerate various stochastic optimization algorithms; among these works  \cite{NSYD2017,SCT2017,RKR2017,BKL2018} focus on improving the convergence of SGD by reducing the variance of the estimator for the gradient.
%
A bandit approach is also used in \cite{NSYD2017} to sample for CD.
However, instead of using the bandit to minimize the cost function directly as in B\_max\_r, it is used to minimize the variance of the estimated gradient. This results in a $O(1/\sqrt{t})$ convergence, whereas the approach in our paper attains an $O(1/t)$ rate of convergence. 
In \cite{RKR2017} bandits are used to find the coordinate $i$ whose gradient has the largest magnitude (similar to GS). At each round $t$ a stochastic bandit problem is solved from scratch, ignoring all past information prior to $t$, which, depending on the number of datapoints, might require many iterations. In contrast, our method incorporates past information and needs only one sample per iteration.


\vspace*{-.5em}
\section{Empirical Simulations} \label{sec:experiment} 
\vspace*{-.5em}
We compare the algorithms from this paper with the state-of-the-art approaches, in two ways.
First, we compare the algorithm (max\_r) for full information setting as in Section~\ref{sec:SDCA} against other state-of-the-art methods that similarly use $O(d\cdot \bar{z})$ computations per epoch of size~$d$, where $\bar{z}$ denotes the number of non-zero elements of $A$.
Next, we compare the algorithm for partial information setting as in Section~\ref{sec:bandit} (B\_max\_r) against other methods with appropriate heuristic modifications that also allow them to use $O(\bar{z})$ computations per epoch.
The datasets we use are found in \cite{CL2011}; we consider usps, aloi and protein for regression, and w8a and a9a for binary classification (see Table \ref{table:stats} in the supplementary materials for statistics about these datasets).

Various cost functions are considered for the experiments, including a strongly convex cost function (ridge regression) and non-smooth cost functions (Lasso and $L_1$-regularized logistic regression). 
These cost functions are optimized using different algorithms, which minimize either the primal or the dual cost function. The convergence time is the metric that we use to evaluate different algorithms.

\begin{figure*}[t] 
	\centering
	\vspace*{-0.5em}
	\hspace{-.2in}
	\rotatebox{90}{\hspace{2.2em} \scriptsize\textbf{Adaptive}}
	\subcaptionbox{usps \label{fig:Lasso_usps_n}}{
		\includegraphics[width=.27\linewidth]{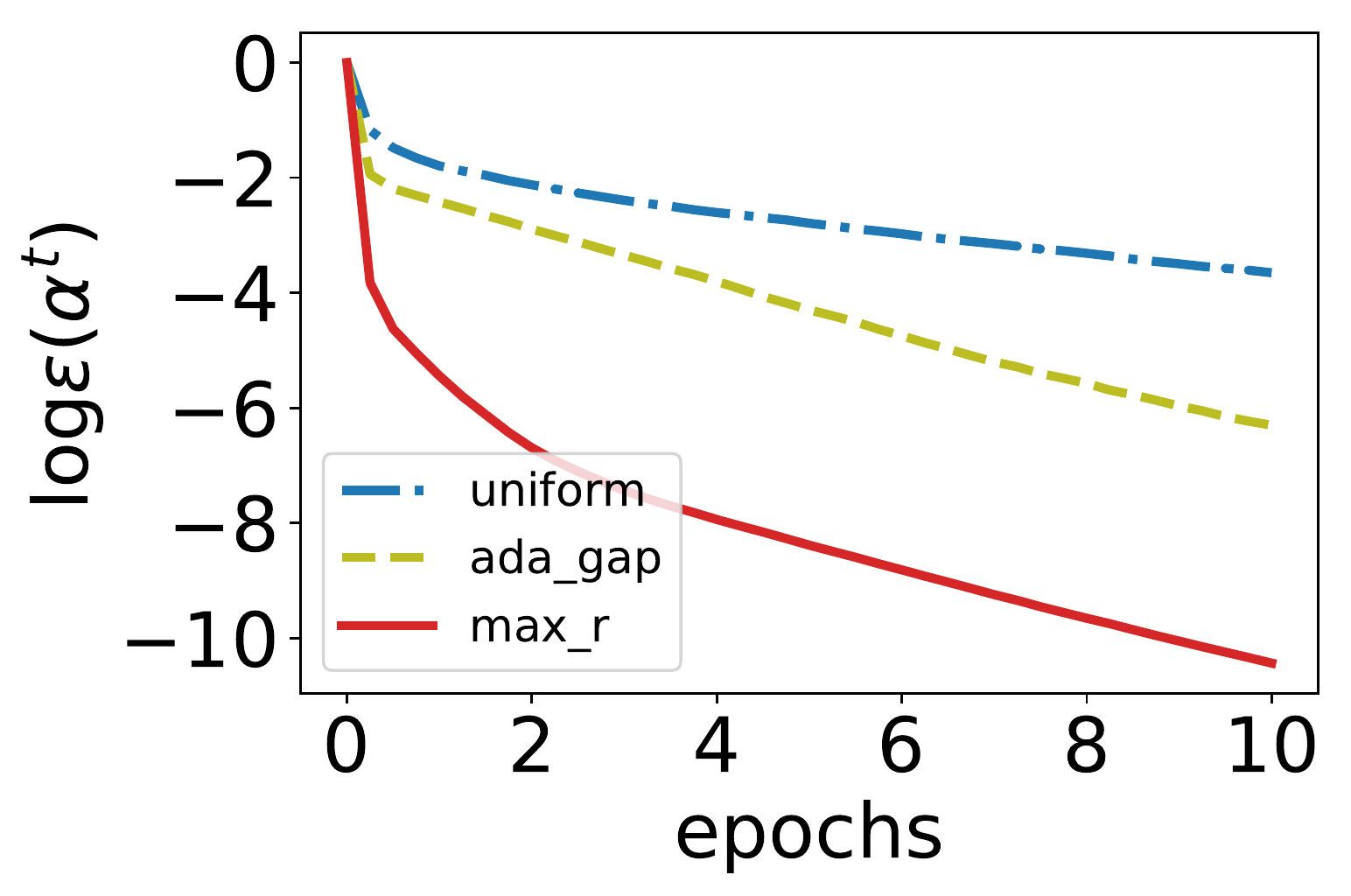}\hspace*{2.5em}} \hspace*{-2.5em}
	\subcaptionbox{aloi \label{fig:Lasso_aloi_n}}{%
		\includegraphics[width=.265\linewidth]{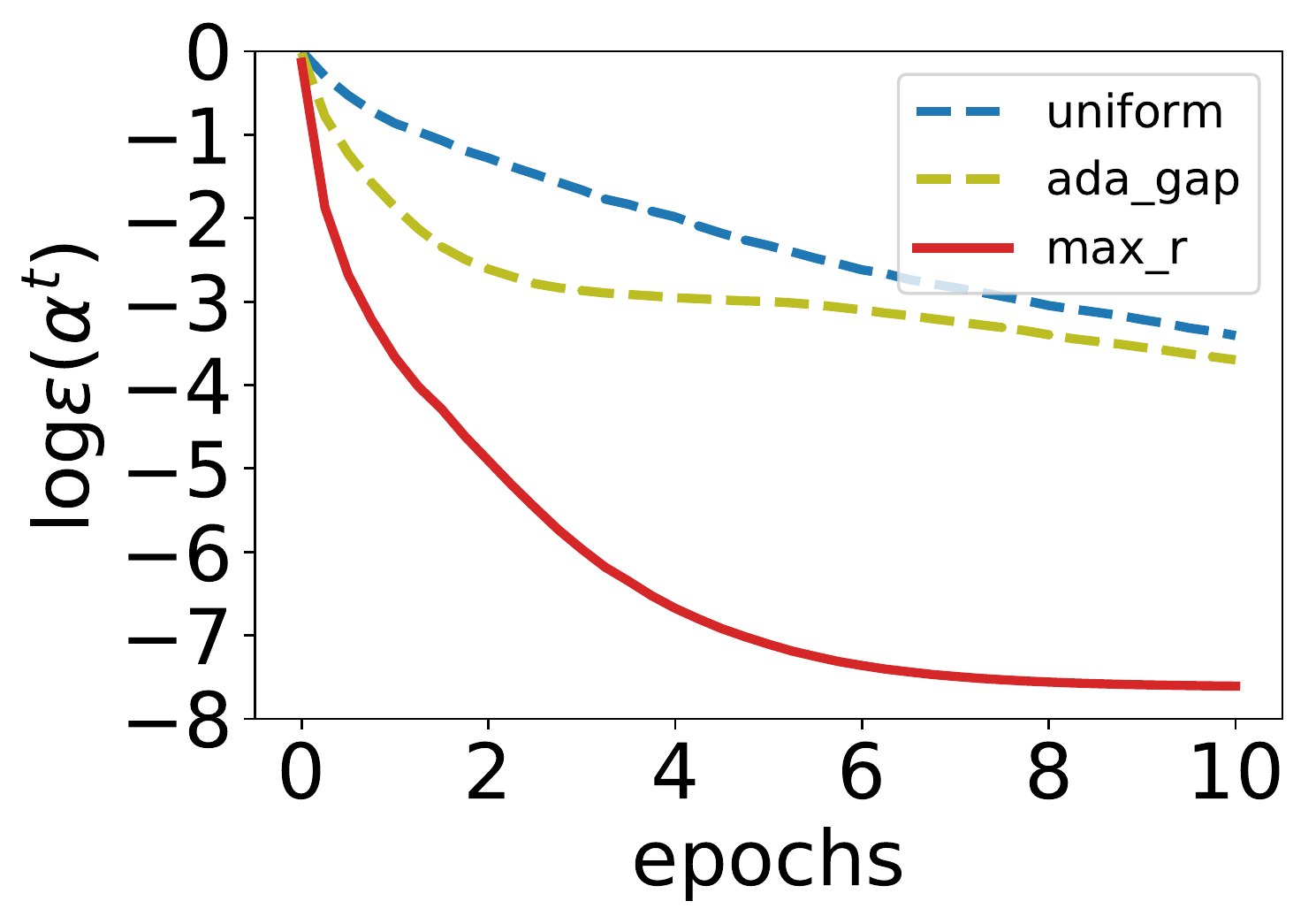}\hspace*{2.5em}} 
	\hspace*{-2.5em}
	\subcaptionbox{protein \label{fig:Lasso_protein_n}}{%
		\includegraphics[width=.27\linewidth]{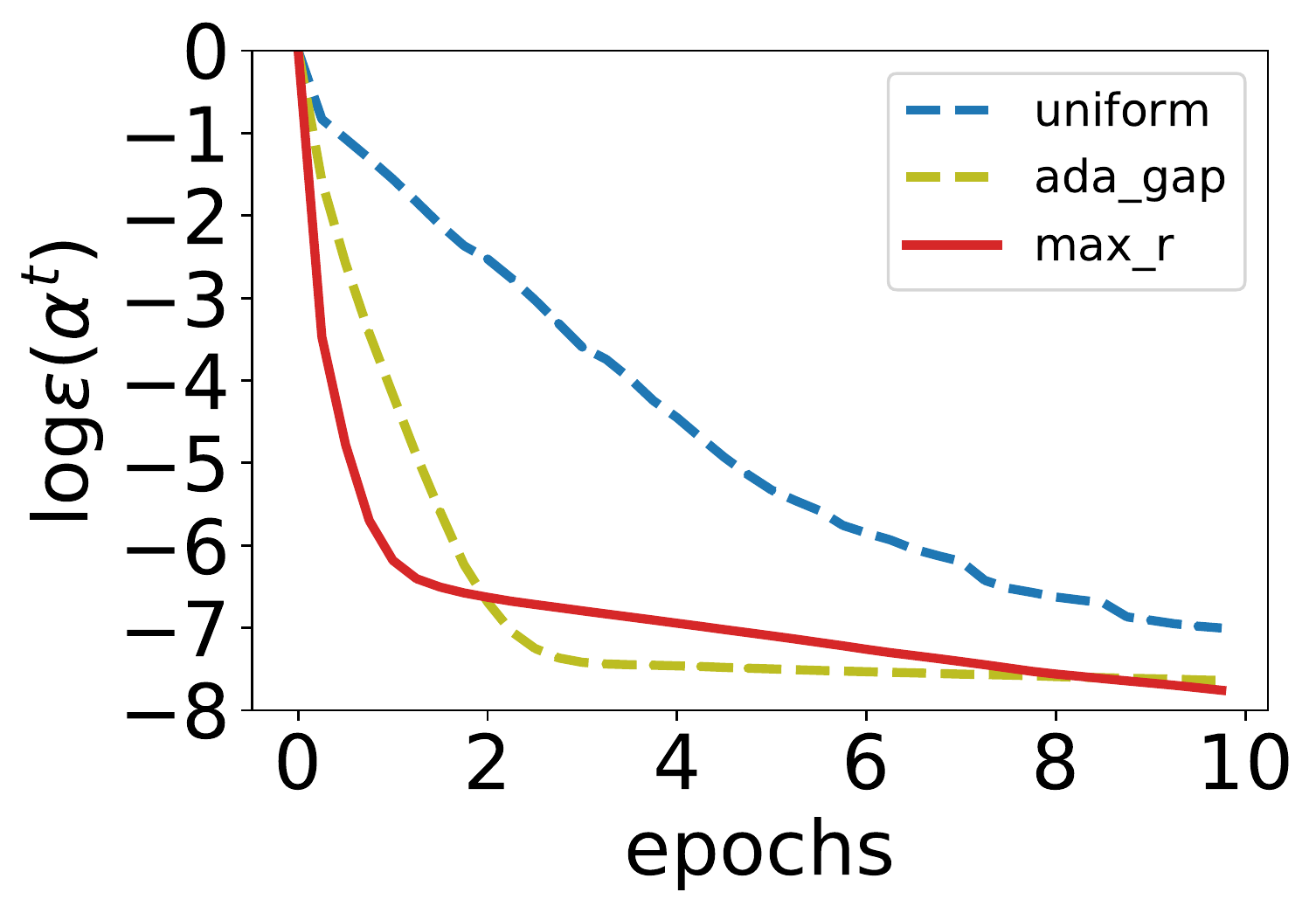}\hspace*{2.5em}}
	\hspace*{-2.5em}
	\hspace{-.2in}
	
	\hspace{-.2in}  
	\rotatebox{90}{\hspace{1.8em}  \scriptsize\textbf{Adaptive-Bandit}} 
	\subcaptionbox{usps \label{fig:Lasso_usps}}{%
		\includegraphics[width= .27\linewidth]{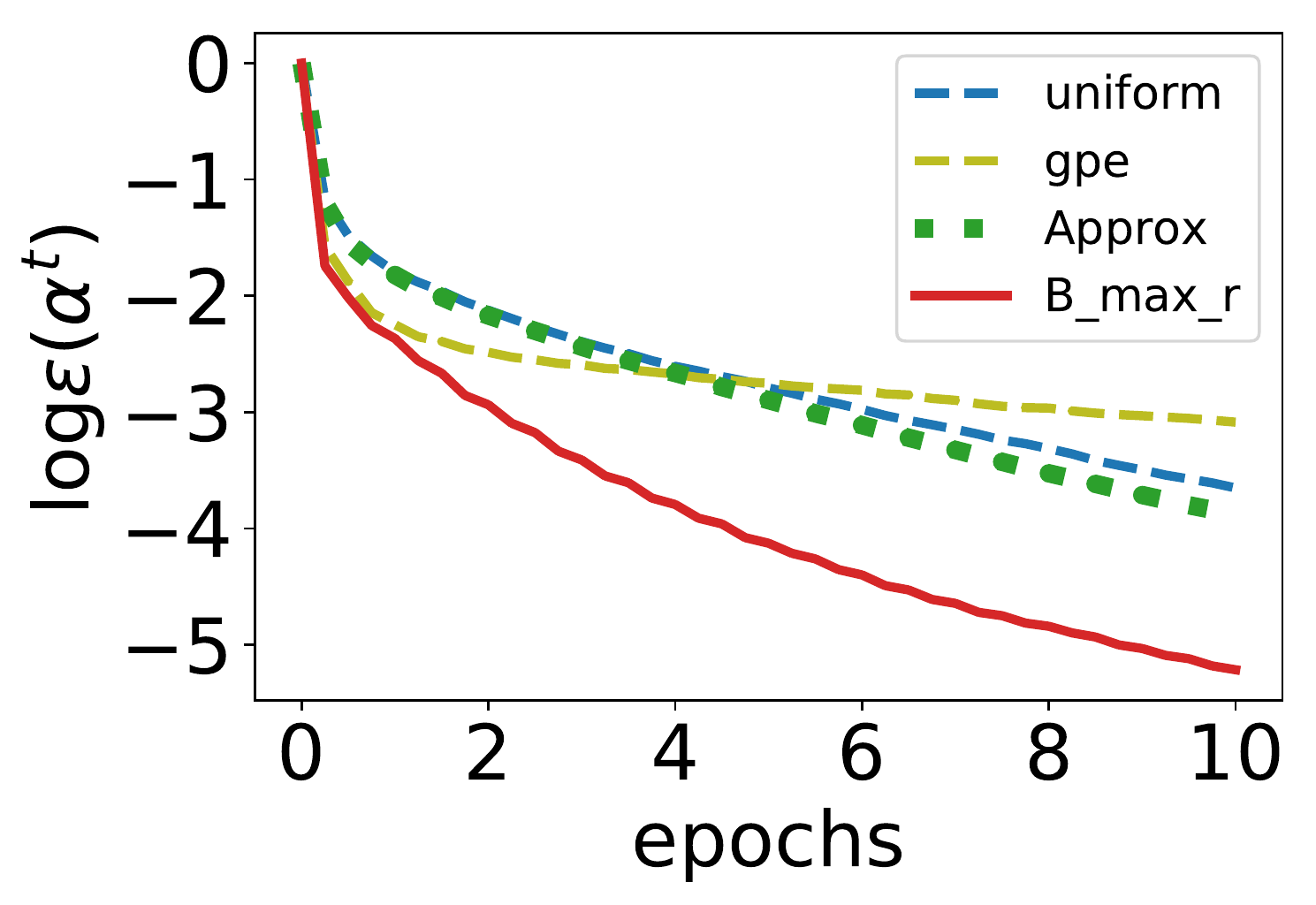}\hspace*{2.5em}} 
	\hspace*{-2.5em}
	\subcaptionbox{aloi \label{fig:Lasso_aloi}}{%
		\includegraphics[width=.27\linewidth]{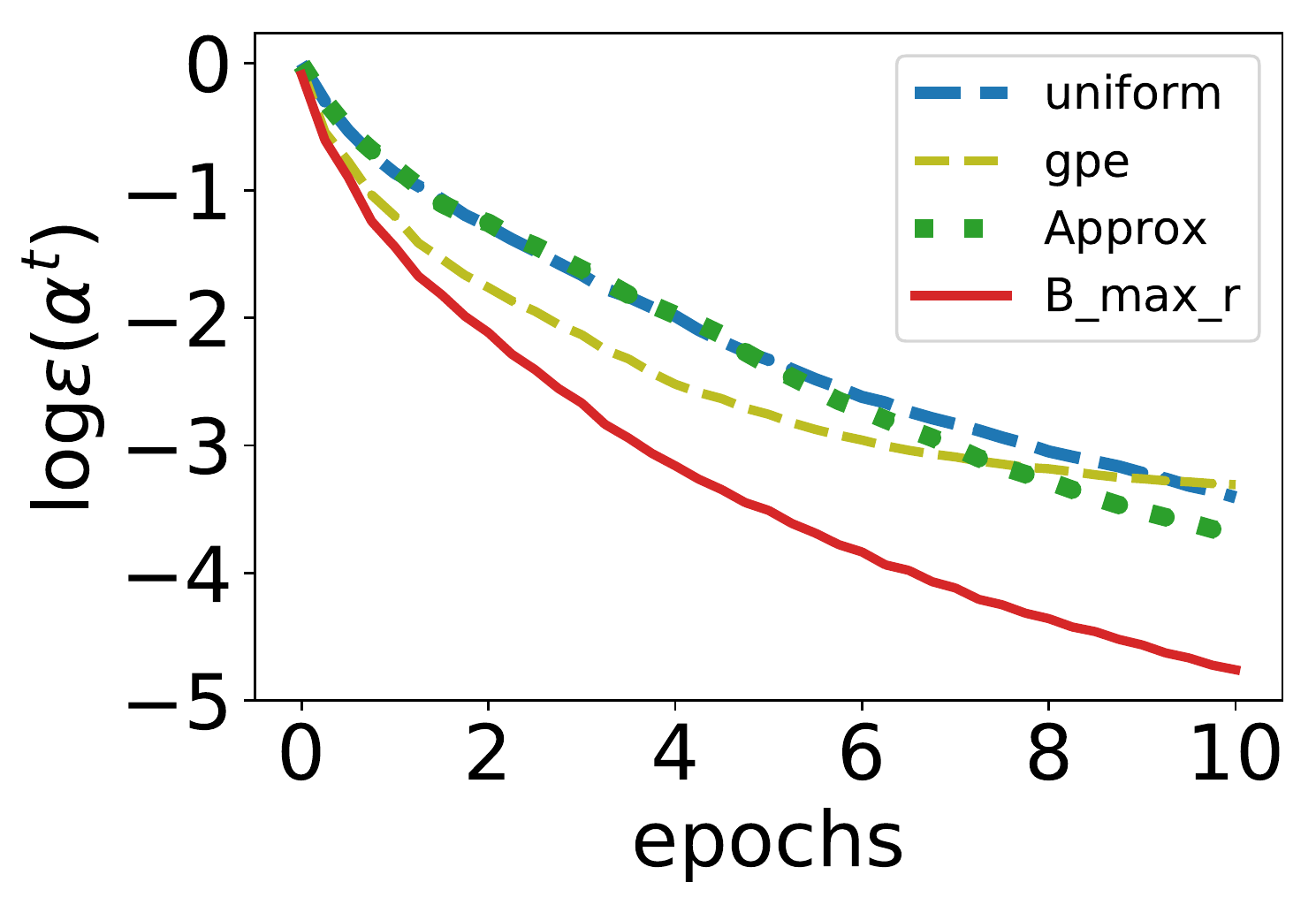}\hspace*{2.5em}}
	\hspace*{-2.5em}
	\subcaptionbox{protein \label{fig:Lasso_protein}}{%
		\includegraphics[width=.27\linewidth]{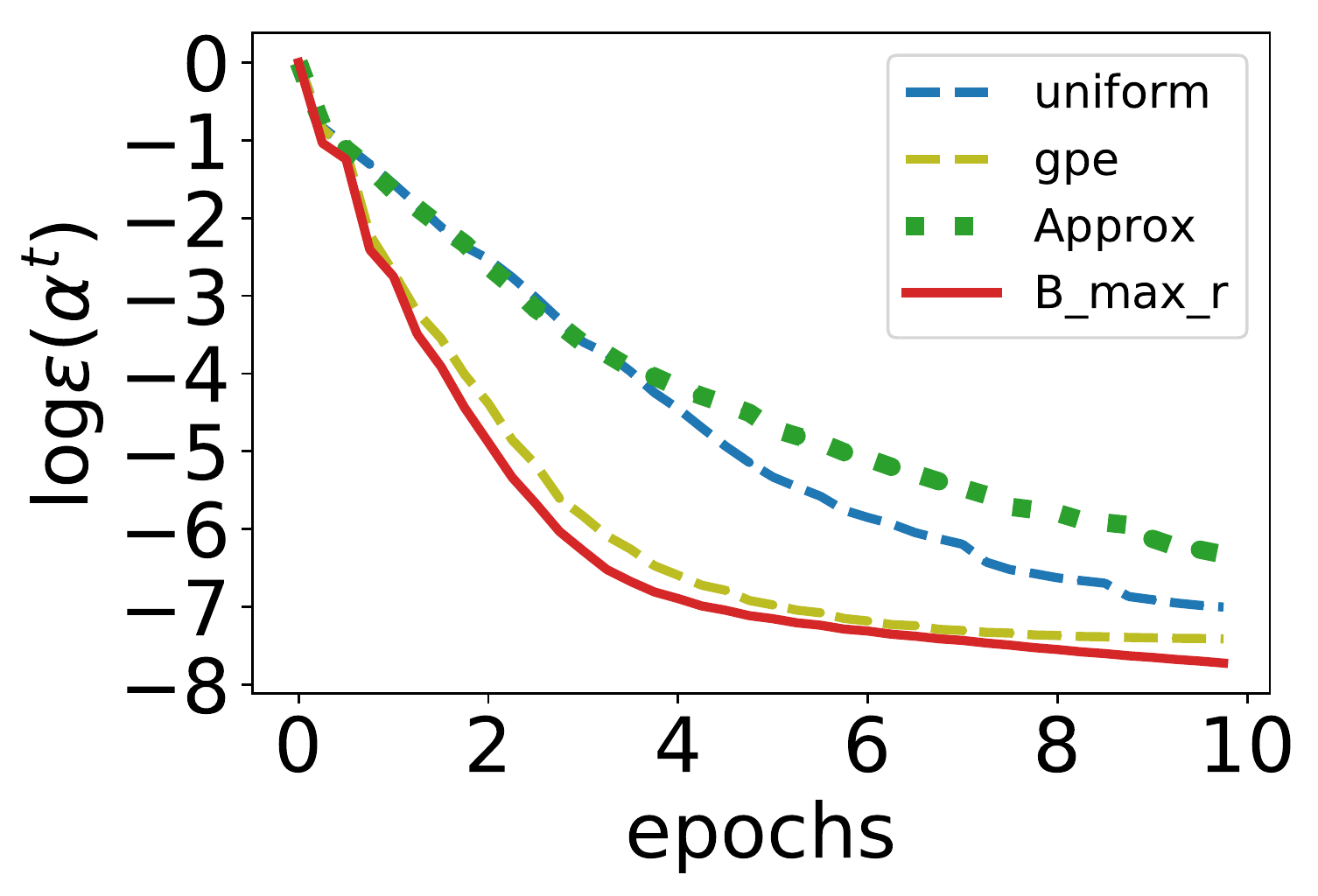}\hspace*{2.5em}}
	\hspace*{-2.5em}
	\hspace{-.2in}

	\vspace*{-.3em}
	\caption{CD for regression using Lasso (i.e., a non-smooth cost function). Y-axis is the log of sub-optimality gap and x-axis is the number of epochs.
		The algorithms presented in this paper (max\_r, B\_max\_r) outperform the state-of-the-art across the board. }\vspace*{-.75em}
	\label{fig:SDCA}
	\vspace{-1em}
\end{figure*}

\subsection{Experimental Setup} 
%
%

Benchmarks for Adaptive Algorithm (max\_r):
\begin{itemize}[nosep,topsep=-0.25ex,leftmargin=2\labelsep]
	\itemsep0em 
	\item \textbf{uniform} \cite{ST2011}: Sample a coordinate $i\in[n]$ uniformly at random.\footnote{
		If $\|\bm{a}_i\|=\|\bm{a}_j\|$ $\forall i,j\in [n]$,  importance sampling method in \cite{ZZ2014}  is equivalent to uniform in Lasso and logistic regression.} 
	\item \textbf{ada\_gap} \cite{PCJ2017}: Sample a coordinate $i\in[n]$ with probability $G_i(\bm{x}^t)/G(\bm{x}^t)$. 
\end{itemize}

\begin{figure}[htb]
	\vspace{-0.5em}
	\centering
	\hspace{-1em}
	\rotatebox{90}{\hspace{-1.2em} \textbf{Full-Info}}
	\begin{subfigure}[h]{0.23\textwidth}
		\includegraphics[width=\textwidth]{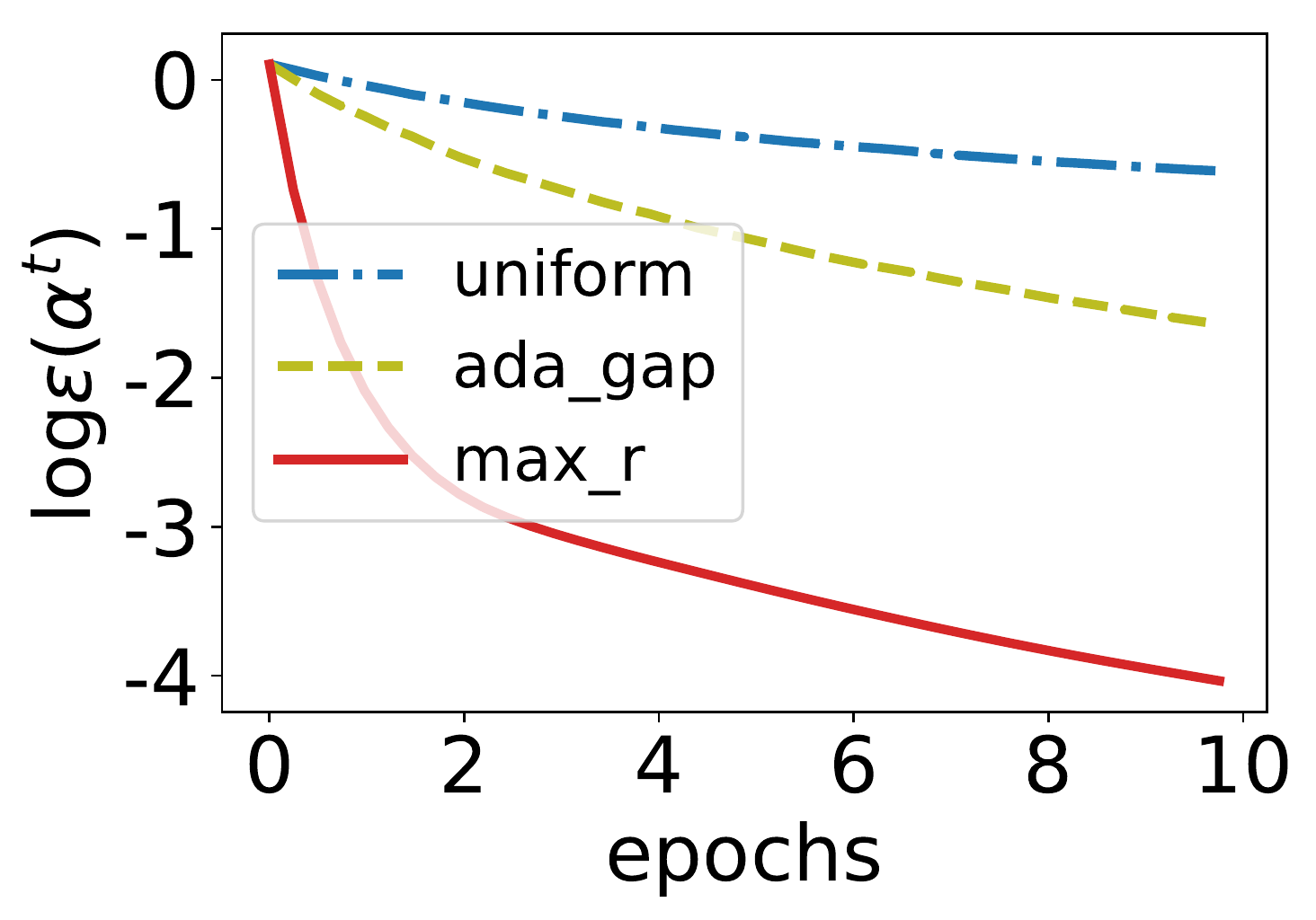}
		\caption{w8a, logistic reg.}
		\label{fig:logistic_w8a_n}
	\end{subfigure}
	\begin{subfigure}[h]{0.23\textwidth}
		\includegraphics[width=\textwidth]{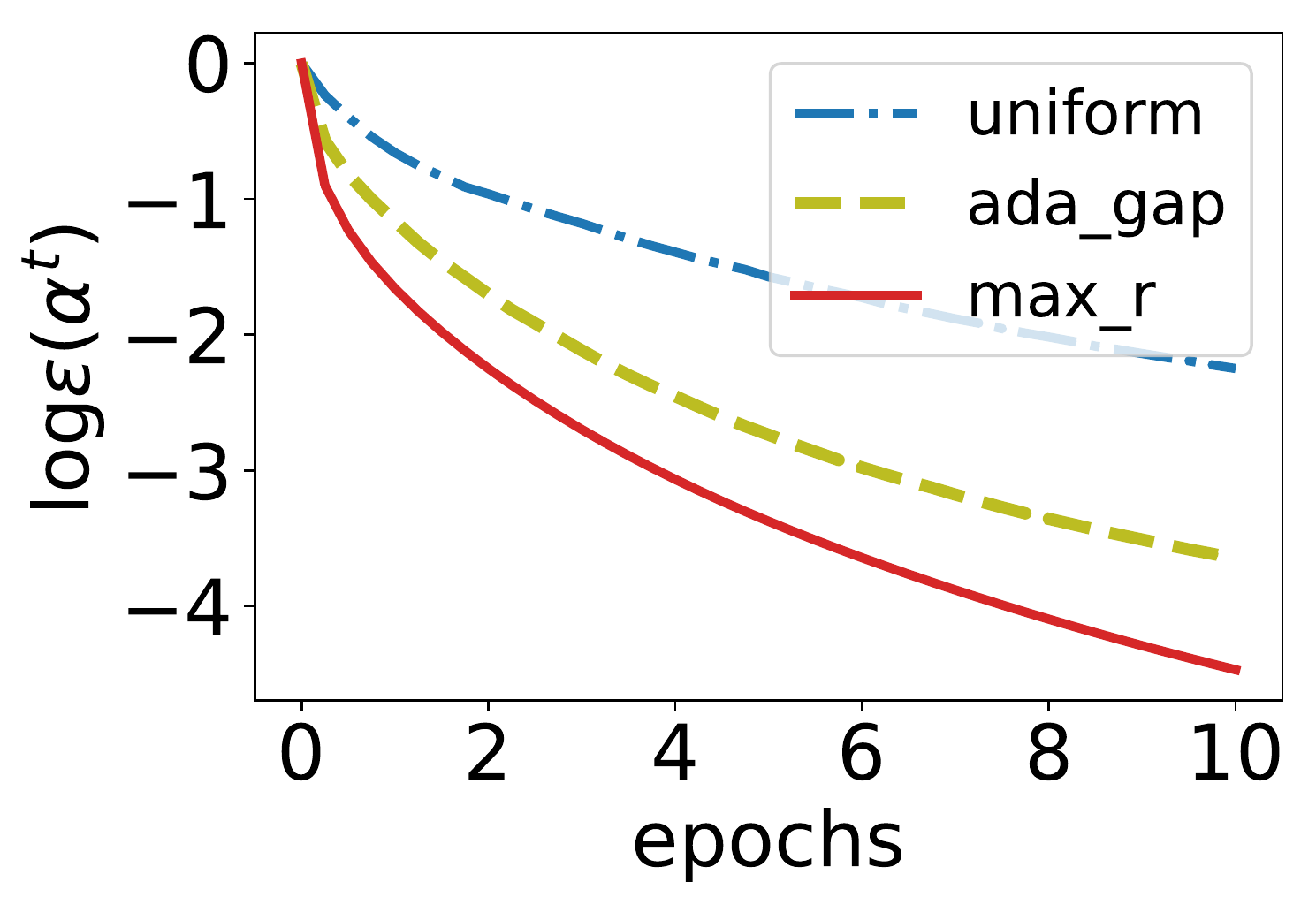}
		\caption{a9a, logistic reg.}
		\label{fig:logistic_a9a_n}
	\end{subfigure}
	\begin{subfigure}[h]{0.24\textwidth}
		\includegraphics[width=\textwidth]{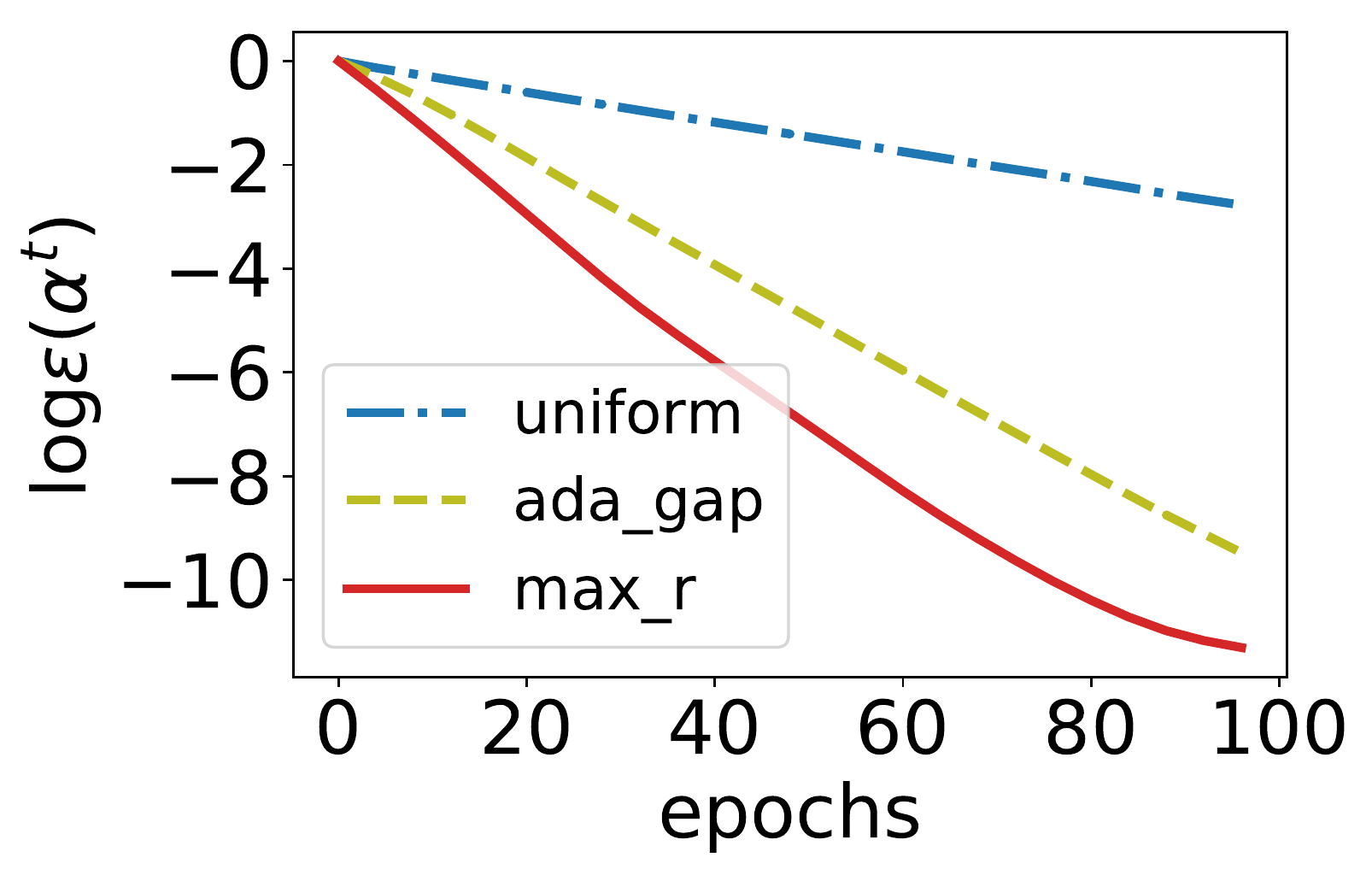}
		\caption{usps, ridge reg.}
		\label{fig:ridge_usps_n}
	\end{subfigure}
	\begin{subfigure}[h]{0.23\textwidth}
		\includegraphics[width=\textwidth]{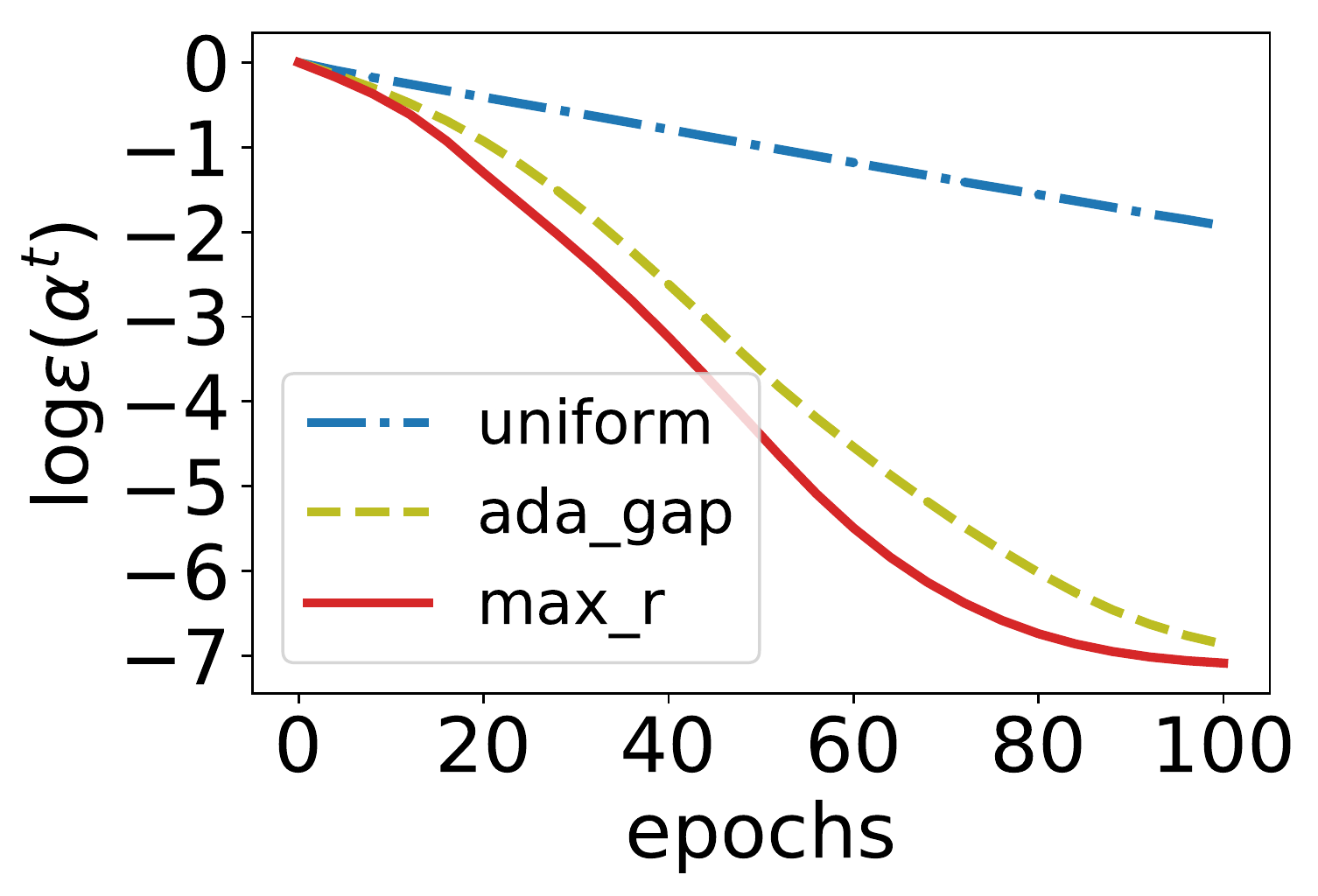}
		\caption{protein, ridge reg.}
		\label{fig:ridge_protein_n}
	\end{subfigure}
	
	\hspace{-1em}
	\rotatebox{90}{\hspace{-1.5em}  \textbf{Partial-Info}} 
	\begin{subfigure}[h]{0.23\textwidth}
		\includegraphics[width=\textwidth]{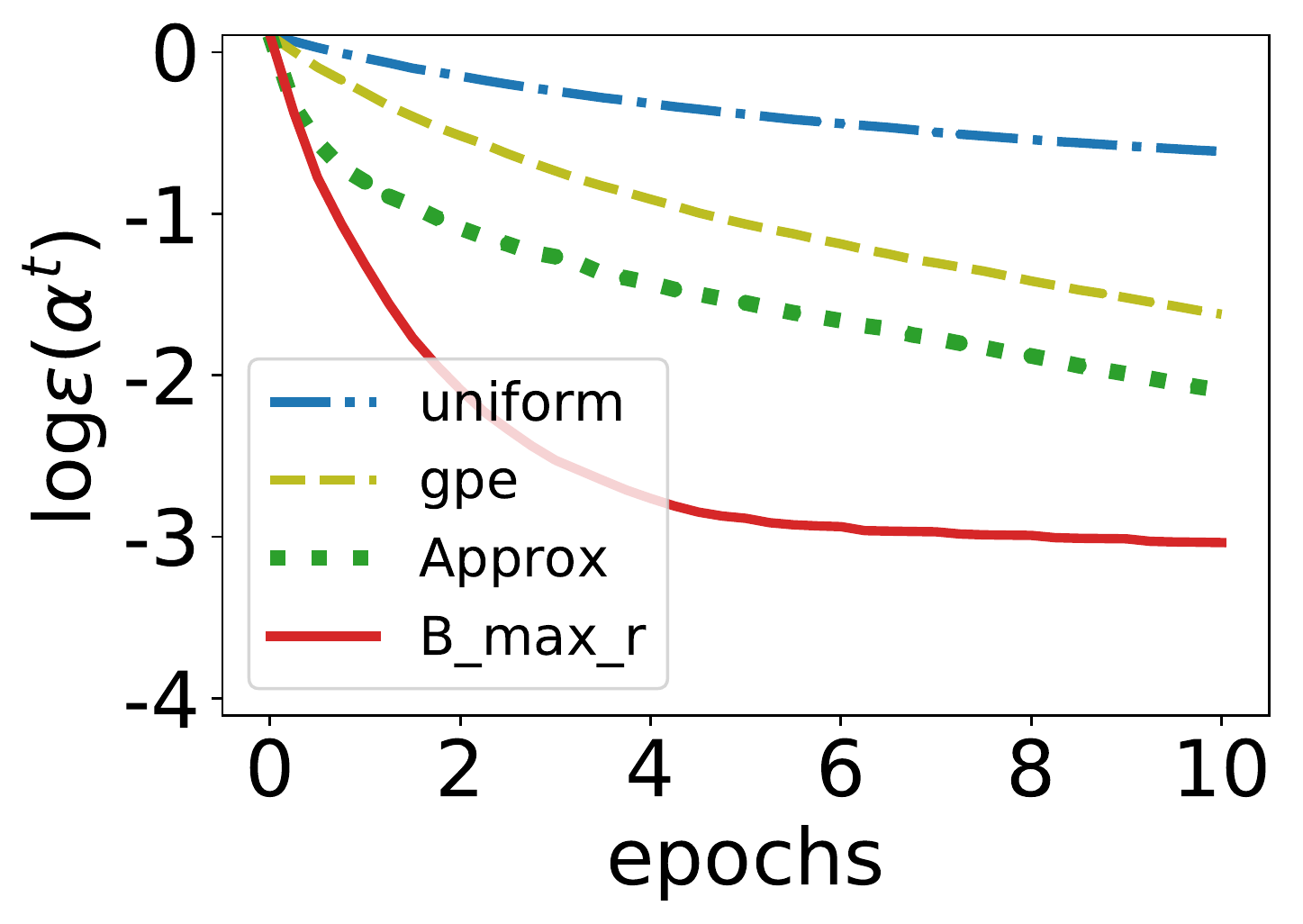}
		\caption{w8a, logistic reg.}
		\label{fig:logistic_w8a}
	\end{subfigure}
	\begin{subfigure}[h]{0.23\textwidth}
		\includegraphics[width=\textwidth]{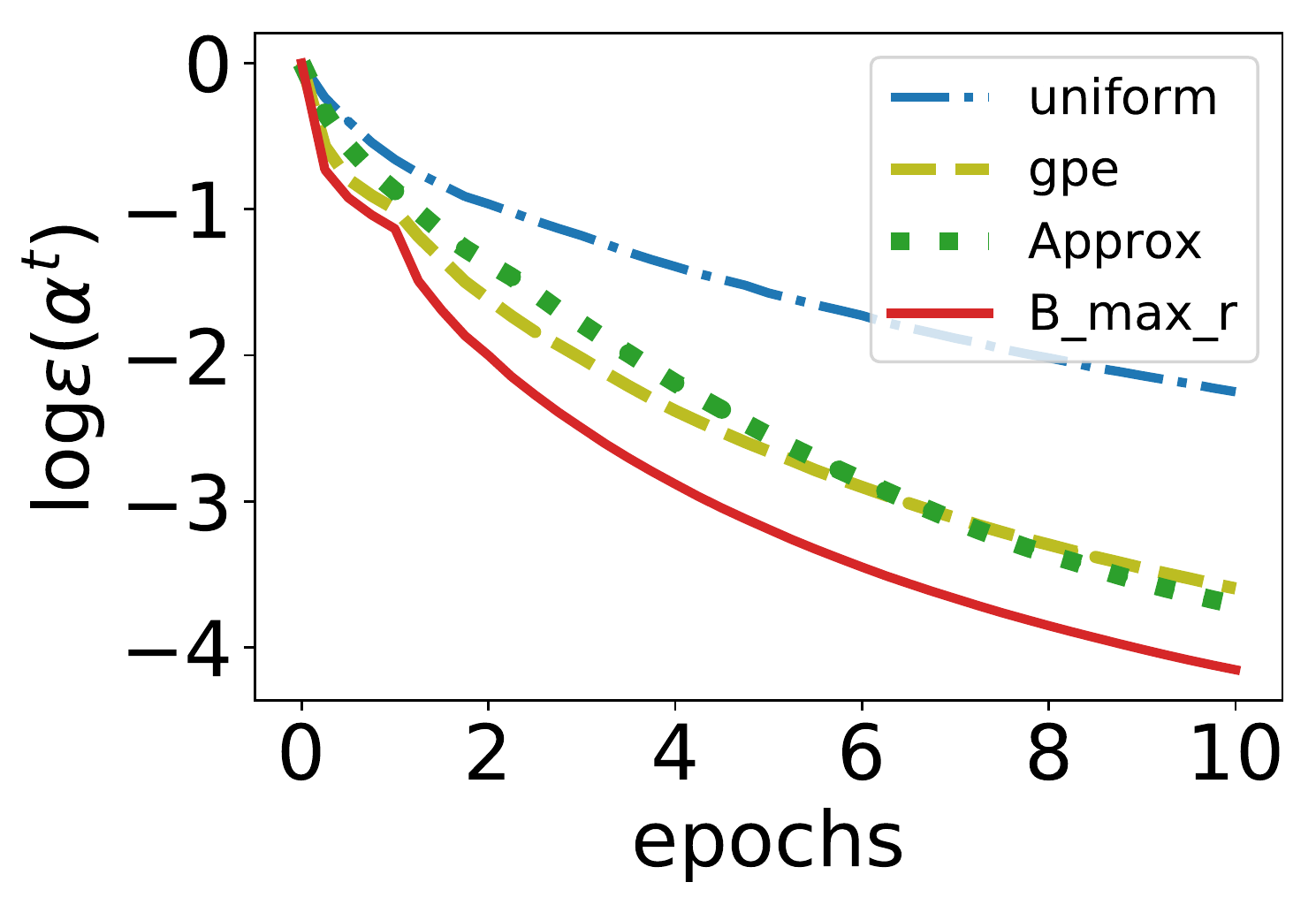}
		\caption{a9a, logistic reg.}
		\label{fig:logistic_a9a}
	\end{subfigure}
	\begin{subfigure}[h]{0.24\textwidth}
		\includegraphics[width=\textwidth]{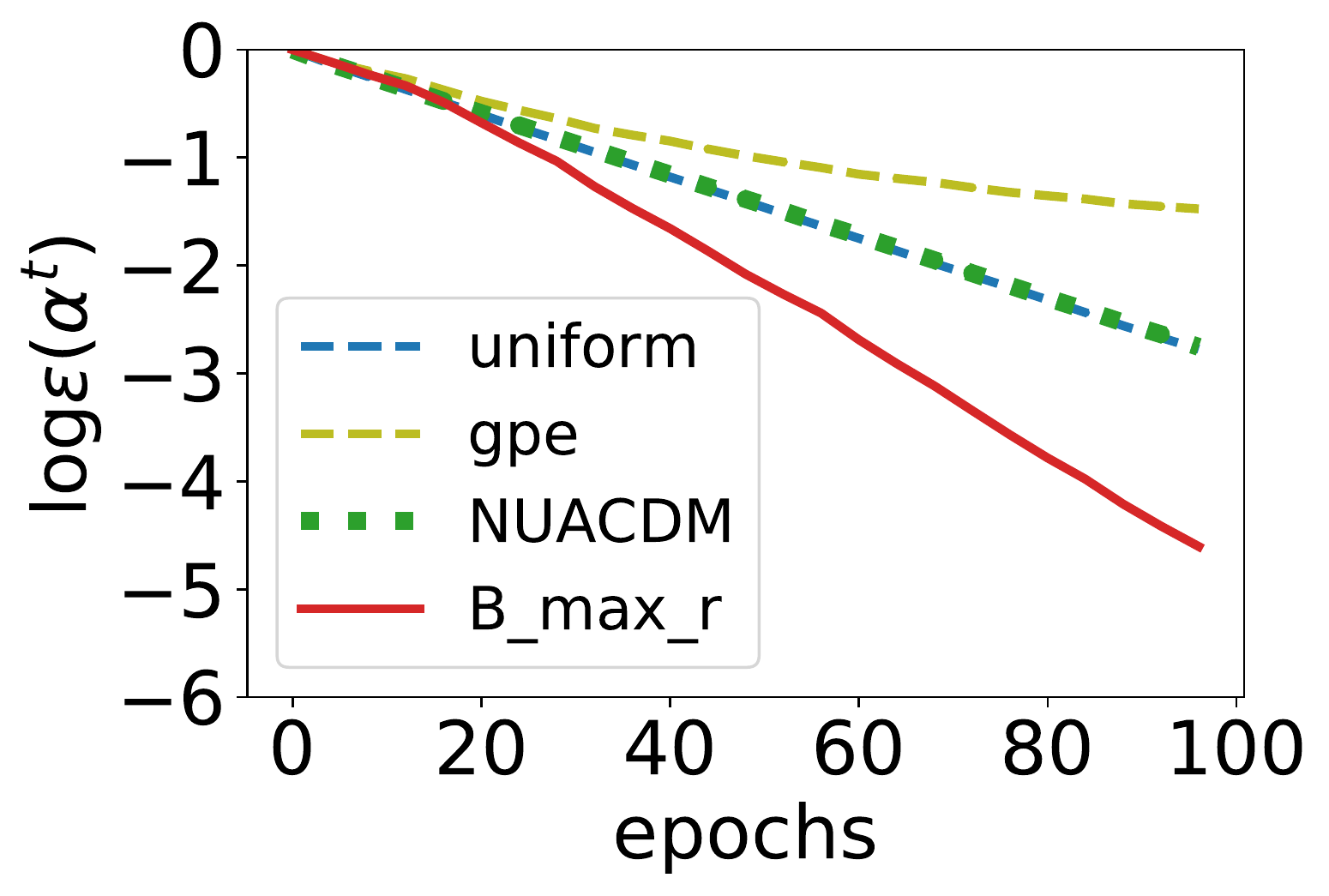}
		\caption{usps, ridge reg.}
		\label{fig:ridge_usps}
	\end{subfigure}
	\begin{subfigure}[h]{0.23\textwidth}
		\includegraphics[width=\textwidth]{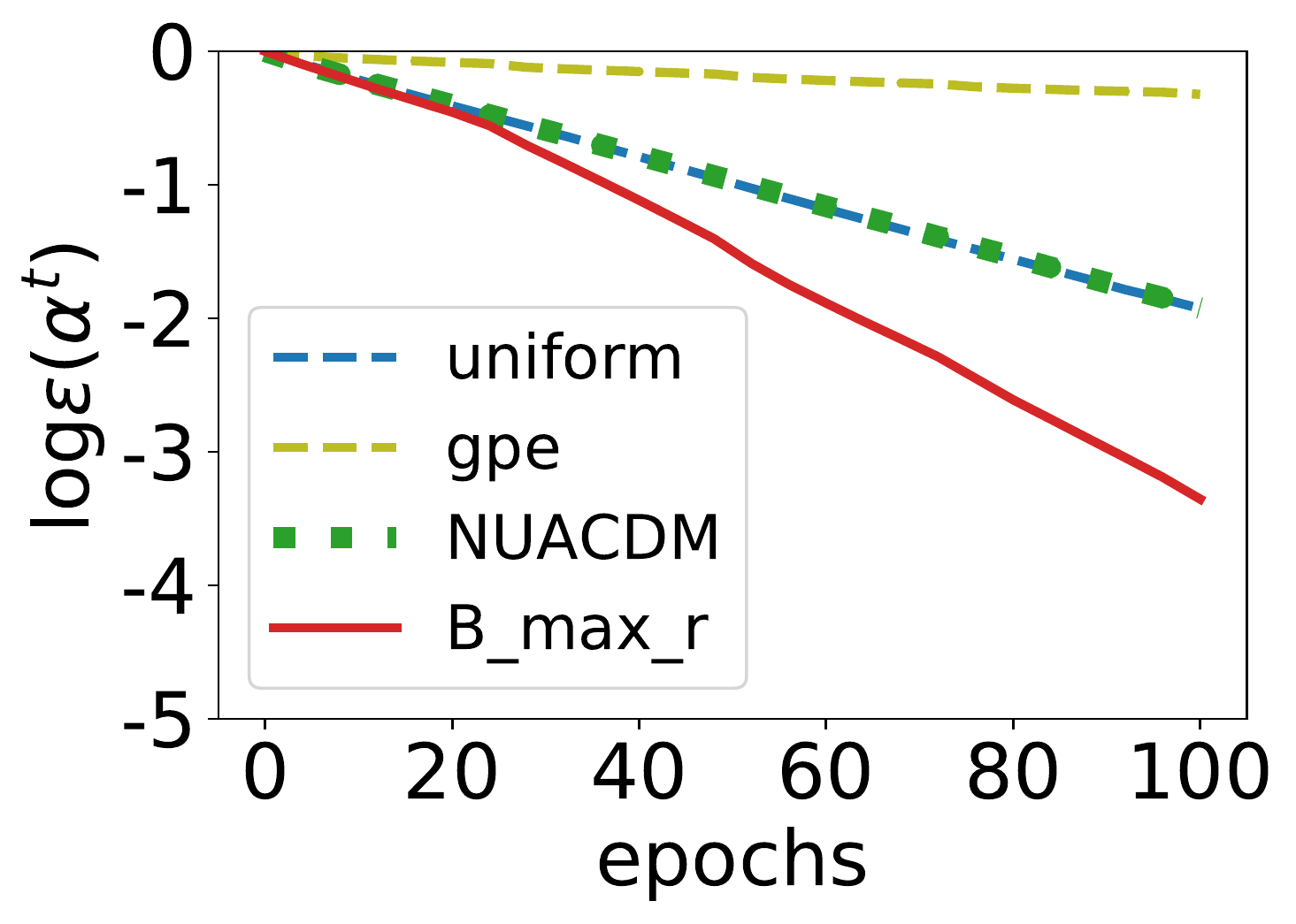}
		\caption{protein, ridge reg.}
		\label{fig:ridge_protein}
	\end{subfigure}
	\caption{CD for binary Classification using $L_1$-regularized logistic regression and CD for regression using Lasso. The algorithms presented in this paper (max\_r and B\_max\_r) outperform the state-of-the-art across the board. }
	\label{fig:regression}
	\vspace{-2em}
\end{figure}

Benchmarks for Adaptive-Bandit Algorithm (B\_max\_r):
For comparison, in addition to the uniform sampling, we consider the coordinate selection method that has the best performance empirically in \cite{PCJ2017} and two accelerated CD methods NUACDM in \cite{AQRY2016} and Approx in \cite{FR2015}. 
\begin{itemize} [nosep,topsep=-0.25ex,leftmargin=2\labelsep]
	\itemsep0em 
	\item \textbf{gpe} \cite{PCJ2017}: This algorithm is a heuristic version of ada\_gap, where the sampling probability
	$p^t_i = G_i(\bm{x}^t)/G(\bm{x}^t)$ for $i\in [d]$ is re-computed once at the beginning of each bin of length $E$. 
	\item  \textbf{NUACDM} \cite{AQRY2016}:
	Sample a coordinate $i \in [d]$ with probability proportional to the square root of smoothness of the cost function along the $i^{th}$ coordinate, then  use an unbiased estimator for the gradient to update the decision variables. 
	\item \textbf{Approx} \cite{FR2015}: Sample a coordinate $i \in [d]$ uniformly at random, then  use an unbiased estimator for the gradient to update the decision variables.
\end{itemize}
NUACDM is the state-of-the-art accelerated CD method (see Figures 2 and 3 in \cite{AQRY2016}) for smooth cost functions. 
Approx is an accelerated CD method proposed for cost functions with non-smooth $g_i(\cdot)$ in~\eqref{eq:cost_function}. We implemented Approx for such cost functions in Lasso and $L_1$-reguralized logistic regression. 
We also implemented Approx for ridge-regression but NUACDM converged faster in our setting, whereas  for the smoothen version of Lasso but Approx converged faster than NUACDM in our setting.
The origin of the computational cost is two-fold: Sampling a coordinate $i$ and updating it.
The average computational cost of the algorithms for $E=d/2$ is depicted in Table~\ref{table:computation}. Next, we explain the setups and update rules used in the experiments.

%
For Lasso $F(\bm{x})= \nicefrac{1}{2n} \|\bm{Y}-A\bm{x}\|^2 + \sum_{i=1}^{n} \lambda|x_i|$.
We consider the stingyCD update proposed in \cite{JG2017}: $x_i^{t+1} = \argmin_z\left[f(A\bm{x}^t + (z-x^t_i)\bm{a}_i)\right] + g_i(x_i)$. 
%
In Lasso, the $g_i$s are not strongly convex ($\mu_i=0$). 
Therefore, for computing the dual residue, the Lipschitzing technique in \cite{DFTJ2016} is used, i.e., $g_i(\cdot)$ is assumed to have bounded support of size $B = \nicefrac{F(\bm{x}^0)}{\lambda}$ and $g^\star_i(u_i)= B\max \left\{|u_i|-\lambda,0 \right\}$.
%
%
%
%

%
For logistic regression $F(\bm{x})= \nicefrac{1}{n} \sum_{i=1}^{n}\log\left(1+\exp(-y_i\cdot \bm{x}^\top \bm{a}_i )\right) + \sum_{i=1}^{n} \lambda|x_i|$.
%
%
%
We consider the update rule proposed in \cite{ST2011}: $x_i^{t+1} = s_{4\lambda}(x_i^t-4 \partial f(A\bm x^t)/\partial x_i)$, where $s_\lambda(q) = \text{sign}(q) \max \{|q|-\lambda,0 \}$.
%
%
%

%
For ridge regression $F(\bm{x})= \nicefrac{1}{n} \|\bm{Y}-A\bm{x}\|^2 + \nicefrac{\lambda}{2} \|\bm{x}\|^2$ and it is strongly convex.
We consider the update proposed for the dual of ridge regression in \cite{SZ2013b}, hence B\_max\_r and other adaptive methods select one of the dual decision variables to update.

In all experiments, $\lambda$s are chosen such that the test and train errors are comparable, and all update rules belong to $\mathcal{H}$.
In addition, in all experiments, $E = d/2$ in B\_max\_r and gap\_per\_epoch. Recall that when minimizing the primal, $d$ is the number of features and when minimizing the dual, $d$ is the number of datapoints.

%
\subsection{Empirical Results}
%
Figure~\ref{fig:SDCA} shows the result for Lasso. Among the adaptive algorithms, max\_r outperforms the state-of-the-art (see Figures~\ref{fig:Lasso_usps_n}, \ref{fig:Lasso_aloi_n} and \ref{fig:Lasso_protein_n}). 
Among the adaptive-bandit algorithms, B\_max\_r outperforms the benchmarks (see Figures~\ref{fig:Lasso_usps}, \ref{fig:Lasso_aloi} and \ref{fig:Lasso_protein}). 
We also see that B\_max\_r converges slower than max\_r for the same number of iterations, but we note that an iteration of B\_max\_r is $O(d)$ times cheaper than max\_r.
For logistic regression, see Figures~\ref{fig:logistic_w8a_n}, \ref{fig:logistic_a9a_n}, \ref{fig:logistic_w8a} and \ref{fig:logistic_a9a}.
Again, those algorithms outperform the state-of-the-art. We also see that B\_max\_r converges with the same rate as max\_r.
We see that the accelerated CD method Approx converges faster than uniform sampling and gap\_per\_epoch, but using B\_max\_r improves the convergence rate and reaches a lower sub-optimality gap $\epsilon$ with the same number of iterations.
%
%
For ridge regression, we see in Figures~\ref{fig:ridge_usps_n}, \ref{fig:ridge_protein_n} that max\_r converges faster than the state-of-the-art ada-gap.
We also see in Figures~\ref{fig:ridge_usps}, \ref{fig:ridge_protein} that B\_max\_r converges faster than other algorithms. 
gap\_per\_epoch performs poorly because it is unable to adapt to the variability of the coordinate-wise duality gaps $G_i$ that vary a lot from one iteration to the next.
%
In contrast, this variation slows down the convergence of B\_max\_r compared to max\_r, but B\_max\_r is still able to cope with this change by exploring and updating the estimations of the marginal decreases.
In the experiments we report the sub-optimality gap as a function of the number of iterations, but the results are also favourable when we report them as a function of actual time. To clarify, we compare the clock time needed by each algorithm to reach a sub-optimality gap $\epsilon(\bm{x}^t) = \exp(-5)$ in Table~\ref{table:computation}.\footnote{In our numerical experiments, all algorithms are optimized as much as possible by avoiding any unnecessary computations, by using efficient data structures for sampling, by reusing the computed values from past iterations and (if possible) by writing the computations in efficient matrix form.}

Next, we study the choice of parameters $\varepsilon$ and $E$ in Algorithm~\ref{alg:Bandit}. As explained in Section~\ref{sec:bandit} the choice of these two parameters affect $c$ in Proposition~\ref{thm:main_bandit}, hence  the convergence rate. To test the effect of $\varepsilon$ and $E$ on the convergence rate, we choose a9a dataset and perform a binary classification on it by using the logistic regression cost function. Figure~\ref{fig:running} depicts the number of iterations required to reach the log-suboptimality gap $\log \epsilon$ of $-5$. In the top-right corner, $\varepsilon=1$ and B\_max\_r becomes CD with uniform sampling (for any value of $E$).
As expected, for any $\varepsilon$, the smaller $E$, the smaller the number of iterations to reach the log-suboptimality gap of $-5$. This means that $c(\varepsilon,E)$ is a decreasing function of $E$. Also, we see that as $\varepsilon$ increases, the convergence becomes slower. That implies that for this dataset and cost function $c(\varepsilon,E)$ is close to 1 for all $\varepsilon$ hence there is no need for exploration and a smaller value for $\varepsilon$ can be chosen.
Figure~\ref{fig:wall_time} depicts the per epoch clock time for $\varepsilon=0.5$ and different values of $E$. Note that the clock time is not a function of $\varepsilon$. As expected, a smaller bin size $E$ results in a larger clock time, because we need to compute the marginal decreases for all coordinates more often. After $E=2d/5$ we see that clock time does not decrease much, this can be explained by the fact that for large enough $E$ computing the gradient takes more clock time than computing the marginal decreases.


\begin{figure}[t]
\vspace{-0.5em}
\centering
    \hspace{-1em}
    \begin{subfigure}[h]{0.3\textwidth}
        \includegraphics[width=\textwidth]{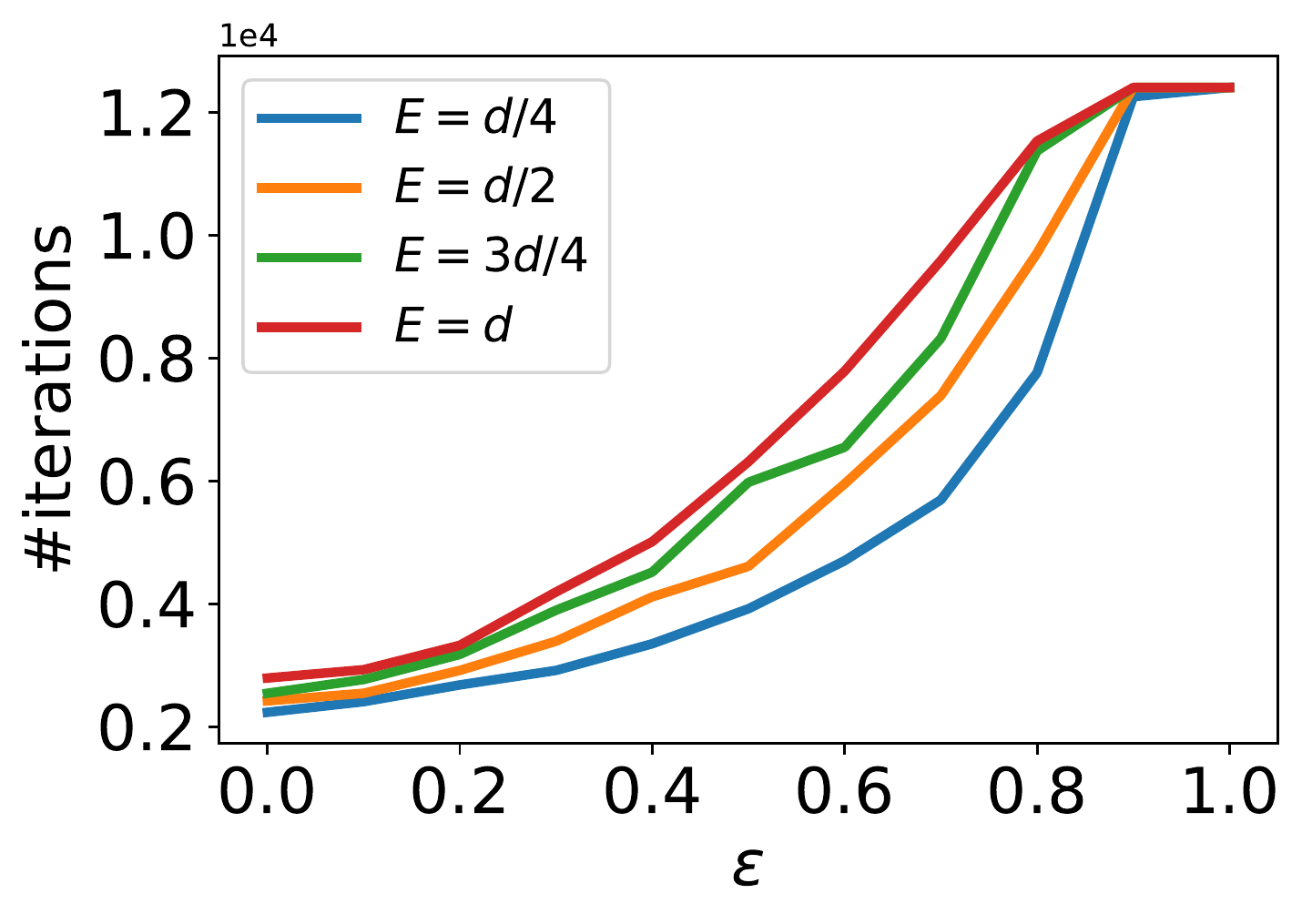}
        \caption{Number of iterations to reach $\log \epsilon(\bm{x}^t) = -5$.}
        \label{fig:running}
    \end{subfigure}
    \begin{subfigure}[h]{0.3\textwidth}
        \includegraphics[width=\textwidth]{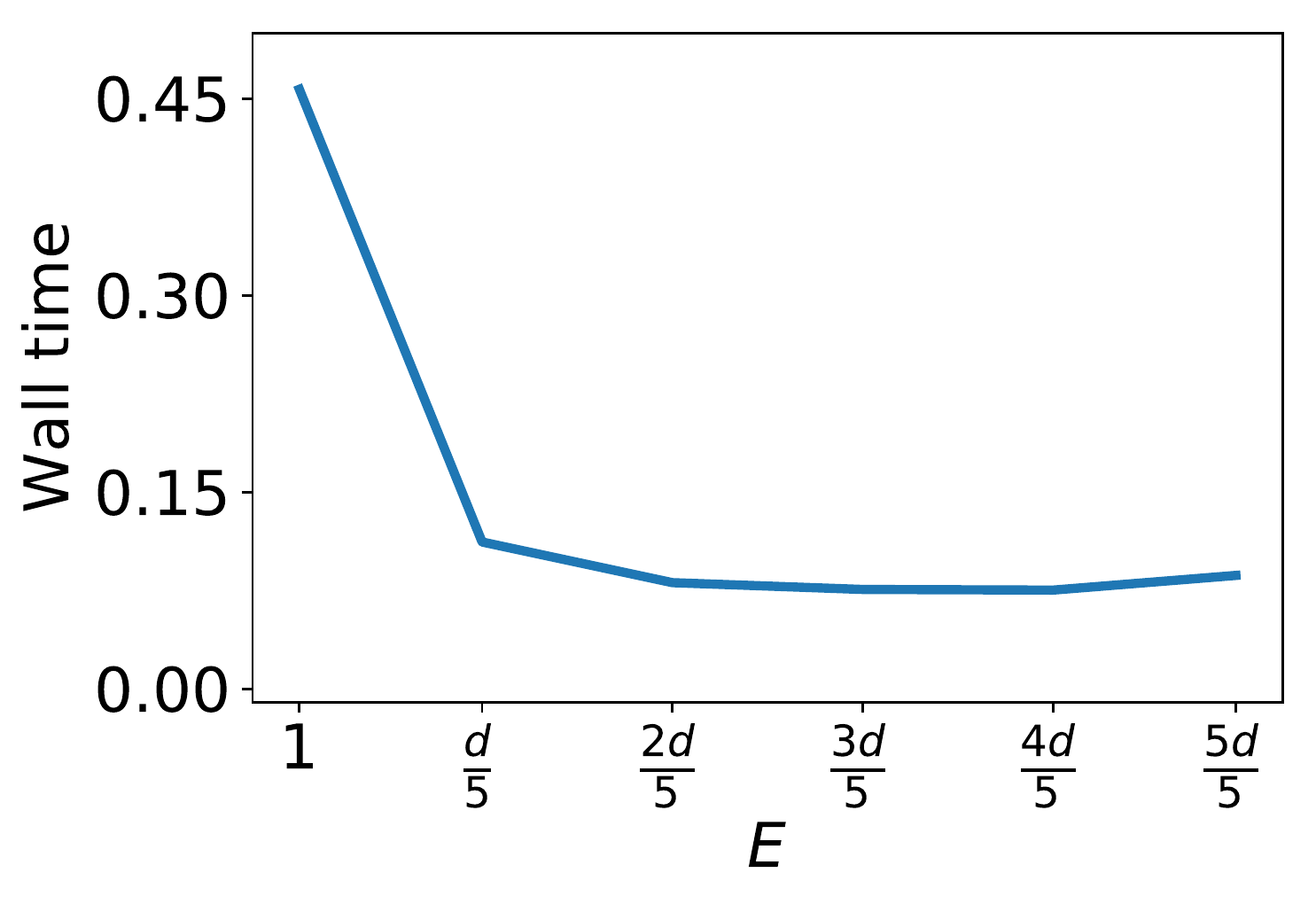}
        \caption{Per-epoch clock time for different values of $E$.}
        \label{fig:wall_time}
    \end{subfigure}
    
\caption{Analysis of the running time of B\_max\_r for different values of $\varepsilon$ and $E$. A smaller $E$ results in fewer iterations, and results in larger clock time per epoch (an epoch is $d$ iterations of CD). \vspace{-2em}}
\label{fig:time}
\end{figure}

 \vspace{-0.5em}
\section{Conclusion}
 \vspace{-1em}
In this work, we propose a new approach to select the coordinates to update in CD methods.
We derive a lower bound on the decrease of the cost function in Lemma~\ref{lem:main_us}, i.e., the marginal decrease, when a coordinate is updated, for a large class of update methods $\mathcal{H}$.
We use the marginal decreases to quantify how much updating a coordinate improves the model.
Next, we use a bandit algorithm to \emph{learn} which coordinates decrease the cost function significantly throughout the course of the optimization algorithm by using the marginal decreases as feedback (see Figure~\ref{fig:algorithm}).
We show that the approach converges faster than state-of-the-art approaches both theoretically and empirically. 
%
%
%
%
%
%
We emphasize that our coordinate selection approach is quite general and works for a large class of update rules $\mathcal{H}$, which includes Lasso, SVM, ridge and logistic regression, and a large class of bandit algorithms that select the coordinate to update.

The bandit algorithm B\_max\_r uses only the marginal decrease of the selected coordinate to update the estimations of the marginal decreases. 
An important open question is to understand the effect of having additional budget to choose multiple coordinates at each time $t$. The challenge lies in designing appropriate algorithms to invest this budget to update the coordinate selection strategy such that B\_max\_r performance becomes even closer to max\_r.

\bibliography{SPReferences.bib}
\bibliographystyle{plain}

\appendix


\newpage
\section*{Appendix} \label{sec:appendix}
\section{Basic Definitions} \label{sec:definition}
For completeness, in this section we recall a variety of standard definitions.
\subsection{Basic Definitions} \label{sec:basic_def}

\begin{definition} [$1/\beta$-smooth]
	A function $f(\cdot):\mathbb{R}^n\longrightarrow \mathbb{R}$ is $1/\beta$-smooth if for any $\bm{x} \in \mathbb{R}^n$ and $\bm{y} \in \mathbb{R}^n$
	\begin{equation*}
	f(\bm{y}) \leq f(\bm{x}) +  \nabla f(\bm{x})^\top (\bm{y}-\bm{x}) + \frac{1}{2 \beta} \|\bm{x}-\bm{y} \|^2.
	\end{equation*}
\end{definition}

\begin{definition} [$\mu$-strongly convex]
	A function $f(\cdot):\mathbb{R}^n\longrightarrow \mathbb{R}$ is $\mu$-strongly convex if for any $\bm{x} \in \mathbb{R}^n$ and $\bm{y} \in \mathbb{R}^n$
	\begin{equation*}
	f(\bm{y}) \geq f(\bm{x}) + \nabla f(\bm{x})^\top (\bm{y}-\bm{x}) + \frac{\mu}{2} \|\bm{x}-\bm{y}\|^2.
	\end{equation*}
\end{definition}

\begin{definition} [$L$-bounded support]
	A function $f(\cdot):\mathbb{R}^n\longrightarrow \mathbb{R}$ has $L$-bounded support if there exists a euclidean ball with radius $L$ such
	\begin{equation*}
	f(\bm{x}) < \infty \Rightarrow \|\bm{x}\| \leq L.
	\end{equation*}
\end{definition}

\subsection{The Class of Update Rules}
\begin{definition}[$\mathcal{H}$] \label{def:s_update} \label{def:H} 
	In  \eqref{eq:cost_function}, let $f(\cdot)$ be $1/\beta$-smooth and each $g_i(\cdot)$ be $\mu_i$-strongly convex with convexity parameter $\mu_i\geq 0$ $\forall i \in [d]$. For $\mu_i = 0$, we assume that $g_i$ has a $L_i$-bounded support.
	Let $\widehat{h}:\mathbb{R}^n\times [n]\longrightarrow \mathbb{R}^n$ be the update rule, i.e., $\bm x^{t+1}=\widehat{h}(\bm{x}^t,i)$, for the decision variables $\bm x^t$ whose $j^{th}$ entry is
	\begin{equation} \label{eq:update_rule}
	\widehat{h}_j(\bm x,i) = \left\{
	\begin{array}{ll}
	x_j + s_j \kappa_j & \hspace{0em} \mbox{if } j=i , 
	\ \\	
	x_j 
	& \hspace{0em} \text{if } j\neq i, \\
	\end{array} \right.
	\end{equation}
	where
	\begin{equation}
	s_i = \min\left\{ 1,  \frac{G_i(\bm x)+\mu_i|\kappa_i|^2/2}{|\kappa_i|^2 (\mu_i+ \|\bm{a}_i\|^2 / \beta )} \right\}.
	\end{equation}	
	
	We use the update $\widehat{h}$ as a baseline to define $\mathcal{H}$. 
	$\mathcal{H}$ is the class of all update rules $h:\mathbb{R}^n\times [n]\longrightarrow \mathbb{R}^n$ such that $ \forall \bm x \in \mathbb{R}^n$ and $i\in [d]$, 
	\begin{equation}
	F\left(h(\bm x,i)\right) \leq F\left(\widehat{h}(\bm x,i)\right), \quad \label{eq:H1}
	\end{equation}
	or
	\begin{equation}
	\widehat{F}_P\left(\bm x,h(\bm x,i)\right) \leq \widehat{F}_P\left(\bm x,\widehat{h}(\bm x,i)\right), \label{eq:H2}
	\end{equation}
	where
	\begin{align} \label{eq:f_hat}
	\widehat{F}_P(\bm x,\bm x') =\sum_{i=1}^{d} \Big( \left(\nabla f(A\bm x)^\top \bm{a}_i \right) (x_i'-x_i) + \frac{1}{2\beta}   \|\bm{a}_i\|^2(x_i'-x_i)^2 + g_i(x_i')-g_i(x_i) \Big).
	\end{align}
\end{definition}

Intuitively, $\widehat{F}_P(\bm x,\bm x')$ approximates the difference of the cost function evaluated at $\bm x$ and $\bm x'$, which follows from the smoothness property of $f$:

\begin{align*} 
F\left(\bm{x}' \right) - F\left(\bm{x} \right) &\leq
 \nabla f(A\bm x)^\top \left(\sum_{i=1}^{d}\bm{a}_i (x_i'-x_i) \right)  + \frac{1}{2\beta} \left \|\sum_{i=1}^{d} \bm{a}_i(x_i'-x_i) \right\|^2 + \sum_{i=1}^{d} g_i(x_i')-g_i(x_i) \nonumber \\ & \leq
 \sum_{i=1}^{d} \Big( \left(\nabla f(A\bm x)^\top \bm{a}_i \right) (x_i'-x_i) + \frac{1}{2\beta}   \|\bm{a}_i\|^2(x_i'-x_i)^2 + g_i(x_i')-g_i(x_i) \Big),
\end{align*}
where the first inequality follows from the smoothness property of $f$ and the last inequality follows from the triangle inequality.

\section{Proofs} 
\subsection{Omitted Proofs for CD (Sections~\ref{sec:marginal}, \ref{sec:SDCD} and \ref{sec:bandit})}\label{sec:proofs}
In this section, we present the proofs of the results in Sections~\ref{sec:marginal}, \ref{sec:SDCD} and \ref{sec:bandit}.
%

\begin{proof2}\textbf{ of Lemma~\ref{lem:main_us}:}
	We first prove the claim for the update rule $\widehat{h}$ given by  \eqref{eq:update_rule} in part (i), and next extend it to any update rule in $\mathcal{H}$ in part (ii).
	
	(i) Our starting point is the inequality

	\begin{align} \label{eq:bound} 
	F&(\bm{x}^{t+1}) \leq ~F(\bm{x}^{t}) -  s_i^t  G_i(\bm{x}^t)  -  \Big(
	\frac{\mu_i\left(s_i^t-(s^t_i)^2\right)}{2} - \frac{(s_i^t)^2 \|\bm{a}_i\|^2}{2\beta} \Big) |\kappa ^t_i|^2,
	\end{align}
	
	%
	which holds for $s_i^t\in [0,1]$, for all $i\in [d]$ and
	which follows from Lemma~3.1 of \cite{PCJ2017}.\footnote{This inequality improves variants in Theorem~2 of \cite{SZ2013b}, Lemma~2 of \cite{ZZ2014} and Lemma~3 of \cite{CQR2015}.}
	%
	%
	%
	After minimizing the right-hand side of \eqref{eq:bound} with respect to $s_i^t$, we attain the desired bound \eqref{eq:our_bound} for $s_i^t$ as in \eqref{eq:s}. 
	
	(ii) We now extend (i) to any update rule in $\mathcal{H}$.
	If the update rule $h(\bm x^t,i)$ satisfies \eqref{eq:H1}, we can easily recover \eqref{eq:our_bound} because
	\begin{equation*}
	F\left(h(\bm x^t,i)\right) \leq F\left(\widehat{h}(\bm x^t,i)\right) \leq F(\bm{x}^{t}) -  r_i^t.
	\end{equation*}
	%
	If the update rule satisfies \eqref{eq:H2}, we have
	\begin{align}
	F\left(h(\bm x^t,i)\right) &\leq F\left(\bm x^t\right) + \widehat{F}_P\left(\bm x^t,h(\bm x^t,i)\right) \label{eq:b1}\\ & \leq
	F\left(\bm x^t\right) + \widehat{F}_P\left(\bm x^t,\widehat{h}(\bm x^t,i)\right)  \label{eq:b2}\\& \leq
	F\left(\bm x^t\right) - r^t_i, \label{eq:b3}
	\end{align} 
	where \eqref{eq:b1} follows from the $1/\beta$-smoothness of $f$ and \eqref{eq:f_hat}, \eqref{eq:b2} follows from \eqref{eq:H2}, and \eqref{eq:b3} follows from the $\mu_i$-strong convexity of $g_i$. More precisely, by plugging
	\begin{align*}
	g_i(x_i^t+&s_i^t\kappa_i^t) = g_i\left(x_i^t+s_i^t (u^t- x^t_i) \right) \leq \nonumber\\& s_i^tg_i(u^t) + (1-s_i^t)g_i(x_i^t)-
	\frac{\mu_i}{2} s_i^t(1-s^t_i)(\kappa^t_i)^2
	\end{align*}
	into \eqref{eq:b2}, and using the Fenchel-Young property, 
	we  recover \eqref{eq:bound}. Then, by setting $s_i^t$ as in \eqref{eq:s} we recover \eqref{eq:b3}.
\end{proof2}

\begin{lemma} \label{lem:GS_equal_max_r}
	Under the assumptions of Lemma~\ref{lem:main_us}, if $g_i(x_i) = \lambda\cdot (x_i)^2$ in \eqref{eq:cost_function} and $\|\bm{a}_i\|=1$ for all $i \in [d]$, then the Gauss-Southwell rule and max\_r are equivalent.
\end{lemma}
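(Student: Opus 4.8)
The plan is to show that for the specified cost function, the two selection rules pick the same coordinate because the quantities they maximize are monotone functions of a common scalar. First I would compute the relevant quantities explicitly for $g_i(x_i) = \lambda x_i^2$ with $\|\bm{a}_i\| = 1$. Since $g_i$ is differentiable here, $\nabla_i F(\bm{x}) = \bm{a}_i^\top \nabla f(A\bm{x}) + 2\lambda x_i = \bm{a}_i^\top \bm{w} + 2\lambda x_i$ with $\bm{w} = \nabla f(A\bm{x})$, so the Gauss--Southwell rule selects $\argmax_i |\bm{a}_i^\top \bm{w} + 2\lambda x_i|$.

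Next I would work out the dual residue and the coordinate-wise duality gap in this setting. Here $g_i$ is $2\lambda$-strongly convex, so $\mu_i = 2\lambda > 0$; the conjugate $g_i^\star$ is differentiable with $\partial g_i^\star(-\bm{a}_i^\top \bm{w}) = \{-\bm{a}_i^\top \bm{w}/(4\lambda)\}$ a singleton, hence $\bar u = -\bm{a}_i^\top \bm{w}/(4\lambda)$ and the dual residue is $\kappa_i = \bar u - x_i = -\bm{a}_i^\top \bm{w}/(4\lambda) - x_i = -\frac{1}{4\lambda}(\bm{a}_i^\top \bm{w} + 4\lambda x_i)$. I would then plug into $G_i(\bm{x}) = g_i^\star(-\bm{a}_i^\top \bm{w}) + g_i(x_i) + x_i \bm{a}_i^\top \bm{w}$, expanding $g_i^\star(u) = u^2/(4\lambda)$, to get $G_i(\bm{x}) = \frac{(\bm{a}_i^\top \bm{w})^2}{4\lambda} + \lambda x_i^2 + x_i \bm{a}_i^\top \bm{w} = \frac{1}{4\lambda}(\bm{a}_i^\top \bm{w} + 2\lambda x_i)^2$. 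So both $|\kappa_i|$ and $\sqrt{G_i}$ are (positive constant) multiples of $|\bm{a}_i^\top \bm{w} + 2\lambda x_i|$, which is exactly the Gauss--Southwell score. (One should double-check the residue/gap algebra against the definitions; the $4\lambda$ versus $2\lambda$ bookkeeping from $g_i = \lambda x_i^2$ having second derivative $2\lambda$ is the one place to be careful.)

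Finally I would show $r_i^t$ itself is a strictly increasing function of this common scalar, so that $\argmax_i r_i^t$ coincides with the Gauss--Southwell choice. From \eqref{eq:s}, with $\mu_i = 2\lambda$ and $\|\bm{a}_i\|^2 = 1$, the fraction inside the $\min$ is $\frac{G_i + \lambda|\kappa_i|^2}{|\kappa_i|^2(2\lambda + 1/\beta)}$; using $G_i = 2\lambda|\kappa_i|^2$ (which follows from the two expressions above, since $\sqrt{G_i} = \sqrt{2\lambda}\,|\kappa_i|$), this simplifies to a constant $\frac{3\lambda}{2\lambda + 1/\beta}$ \emph{independent of $i$}. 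Hence $s_i^t$ is the same value $s$ for every coordinate, and $r_i^t = \frac{s(G_i + \lambda|\kappa_i|^2)}{2}$ (or the $s=1$ branch, uniformly across $i$) is a fixed positive multiple of $G_i^t$, hence of $(\bm{a}_i^\top \bm{w} + 2\lambda x_i)^2$. Therefore $\argmax_i r_i^t = \argmax_i |\bm{a}_i^\top \bm{w} + 2\lambda x_i| = \argmax_i |\nabla_i F(\bm{x})|$, which is the claim. The main obstacle is purely the conjugate-function bookkeeping in step two --- getting $G_i$ and $\kappa_i$ in closed form correctly --- after which step three is a short computation exploiting that $s_i^t$ is constant across coordinates.
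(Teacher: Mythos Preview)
Your approach is exactly the paper's: compute $G_i$ and $\kappa_i$ explicitly for $g_i(x_i)=\lambda x_i^2$, observe that $s_i^t$ is the \emph{same constant} for every $i$, and conclude that $r_i^t$ is a fixed increasing function of $(\nabla_i F(\bm x))^2$.

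There is, however, a genuine arithmetic slip in the one place you yourself flagged. From $g_i^\star(z)=z^2/(4\lambda)$ one gets $(g_i^\star)'(z)=z/(2\lambda)$, not $z/(4\lambda)$. Hence $\bar u=-\bm a_i^\top\bm w/(2\lambda)$ and
\[
\kappa_i=-\frac{\bm a_i^\top\bm w+2\lambda x_i}{2\lambda}=-\frac{\nabla_i F(\bm x)}{2\lambda},
\qquad G_i=\lambda|\kappa_i|^2,
\]
so the fraction in \eqref{eq:s} is the $i$-independent constant $2\lambda/(2\lambda+1/\beta)$ (the paper writes $3\lambda$ in the numerator; either way it is constant in $i$). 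This matters more than a stray constant: with your incorrect $\kappa_i=-\tfrac{1}{4\lambda}(\bm a_i^\top\bm w+4\lambda x_i)$, the quantity $|\kappa_i|$ is \emph{not} a multiple of $|\nabla_i F(\bm x)|$, the identity $G_i=c|\kappa_i|^2$ fails, and $s_i^t$ would in general depend on $i$, which would break the final step. Once you fix the derivative of $g_i^\star$, your argument goes through verbatim and coincides with the paper's.
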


\begin{proof} \textbf{[Proof of Lemma~\ref{lem:GS_equal_max_r}]} 
	We prove the lemma for $\bm{x} \in \mathbb{R}^n$ and drop the dependence on $t$ throughout the proof.
	First, we show that $G_i \sim (\nabla_i F(\bm{x}) )^2$.
	The function $g_i(x_i) = \lambda \cdot (x_i)^2$ is $2 \lambda$ strongly convex for all $i \in [d]$, i.e., $\mu_i = \mu = 2 \lambda$.
	The dual convex conjugate of the function $g_i(x_i) = \lambda \cdot (x_i)^2$ is $$g^\star(z) = \frac{z^2}{4\lambda}.$$
	Then, for $\bm{w} = \nabla f(A \bm{x})$, $G_i(\bm{x}) =  g^\star_i(-\bm{a}_i^\top \bm{w}) + g_i(x_i) + x_i\bm{a}_i^\top \bm{w}$ becomes
	\begin{align*}
	G_i(\bm{x}) = \frac{(\bm{a}_i^\top \bm{w})^2}{4 \lambda} + \lambda (x_i)^2 + x_i \bm{a}_i^\top \bm{w} = \frac{\left( \bm{a}_i^\top \bm{w} + 2\lambda x_i \right)^2}{4\lambda}.
	\end{align*}
	As $\nabla_i F(\bm{x}) = \bm{a}_i^\top \nabla f(A \bm{x}) + 2\lambda x_i = \bm{a}_i^\top \bm{w} + 2\lambda x_i$, we have $$G_i(\bm{x})=\frac{\left( \nabla_i F(\bm{x}) \right)^2}{4\lambda}.$$
	
	Next, note that 
	\begin{equation*}
	\kappa_i = \partial g^\star_i(-\bm{a}^\top_i \bm{w}) - x_i = \frac{-\bm{a}^\top_i \bm{w}}{2\lambda} -x_i = -\frac{\nabla_i F(\bm{x})}{2\lambda}.
	\end{equation*}
	
	Next, plugging $G_i(\bm{x})=\nicefrac{\left(\nabla_i F(\bm{x}) \right)^2}{4\lambda}$ and $\kappa_i =  -\nicefrac{\nabla_i F(\bm{x})}{2\lambda}$ in \eqref{eq:s} yields
	\begin{equation*}
	s_i = \min \left\{1, \frac{3\lambda}{2\lambda + \frac{1}{\beta}} \right\} \text{ for all } i \in [d].
	\end{equation*}
	Hence, $r_i$ in \eqref{eq:reward} becomes
	\begin{equation*}
	r_i = \left\{
	\begin{array}{ll}
	\frac{\left(\nabla_i F(\bm{x}) \right)^2}{8\lambda^2\beta}(2\lambda\beta-1) & \mbox{if } \lambda \geq \frac{1}{\beta} ,  \\	
	 \frac{3}{4} \frac{\left(\nabla_i F(\bm{x}) \right)^2}{2 \lambda + \frac{1}{\beta}} \hspace{.2in}
	&  \text{otherwise}, \\
	\end{array} \right.
	\end{equation*}
	Hence, $\argmax_{i \in [d]} r_i = \argmax_{i \in [d]} \left(\nabla_i F(\bm{x}) \right)^2$.
	In Gauss-Southwell rule, we choose the coordinate whose gradient has the largest magnitude, i.e., $\argmax_{i \in [d]}|\nabla_i F(\bm{x})|$. 
	As a result, the selection rules max\_r and Gauss-Southwell rule are equivalent.
\end{proof}

Now, we show that our approach leads to a linear convergence when $g_i$ is strongly convex for $i\in [d]$, i.e., when $\mu_i>0$.

\begin{proof} \textbf{[Proof of Theorem~\ref{thm:linear}]} 
	According to Proposition~\ref{prop:opt}, we know that the selection rule max\_r is optimal for the bound \eqref{eq:our_bound}. 
	Therefore, if we prove the convergence results using \eqref{eq:our_bound} for another selection rule, then the same convergence result holds for max\_r.
	For this proof, we use the following selection rule: At time $l$, we choose the coordinate $i$ with the largest $\nicefrac{ G_i(\bm{x}^t) \mu_i }{(\mu_i+\nicefrac{\|\bm{a}_i\|^2}{\beta})}$, which we denote by $i^\star$.

	First, we show that $r^t_{i^\star}$ in \eqref{eq:reward}  is lower bounded as follows
	
	\begin{equation} \label{eq:temp_l1}
	r^t_{i^\star} \geq G_{i^\star}(\bm{x}^t) \frac{\mu_{i^\star}}{\mu_{i^\star} + \frac{\|\bm{a}_{i^\star}\|^2}{\beta}}.
	\end{equation}
	
	We prove  \eqref{eq:temp_l1} for two cases $s_{i^\star}^t=1$ and $s_{i^\star}^t<1$ separately, where $s_{i}^t$ is defined in \eqref{eq:s} for $i\in [d]$. 
	
	(a) If $s^t_{i^\star}=1$, according to \eqref{eq:reward} we have
	
	\begin{equation*}
	r^t_{i^\star} = G_{i^\star}(\bm{x}^t)-\frac{\|\bm{a}_{i^\star}\|^2|\kappa^t_{i^\star}|^2}{2\beta}.
	\end{equation*}
	
	Next, we prove \eqref{eq:temp_l1} by showing that $r^t_{i^\star} -  G_{i^\star}(\bm{x}^t) \frac{\mu_{i^\star}}{\mu_{i^\star} + \nicefrac{\|\bm{a}_{i^\star}\|^2}{\beta}} \geq 0$, 
	
	\begin{align*}
	&r^t_{i^\star} -  G_{i^\star}(\bm{x}^t) \frac{\mu_{i^\star}}{\mu_{i^\star} + \frac{\|\bm{a}_{i^\star}\|^2}{\beta}} \nonumber \\&=  G_{i^\star}(\bm{x}^t) \frac{ \frac{\|\bm{a}_{i^\star}\|^2}{\beta} }{\mu_{i^\star} + \frac{\|\bm{a}_{i^\star}\|^2}{\beta}} - \frac{\|\bm{a}_{i^\star}\|^2|\kappa^t_{i^\star}|^2}{2\beta} \nonumber \\& = \frac{\|\bm{a}_{i^\star}\|^2}{2\beta} \cdot \frac{ 2G_{i^\star}(\bm{x}^t) -  \mu_i |\kappa^t_{i^\star}|^2 - \frac{ \|\bm{a}_{i^\star}\|^2|\kappa_{i^\star}^t|^2}{\beta} }{\mu_{i^\star} + \frac{\|\bm{a}_{i^\star}\|^2}{\beta}} \geq 0,
	\end{align*}
	
	where the last inequality follows by setting $s^t_{i^\star}=1$ in \eqref{eq:s} which then reads:
	
	\begin{equation*}
	G_{i^\star}(\bm{x}^t) - \frac{\mu_{i^\star}|\kappa_{i^\star}^t|^2}{2} - \frac{ \|\bm{a}_{i^\star}\|^2|\kappa_{i^\star}^t|^2}{\beta} \geq 0.
	\end{equation*}
	
	This proves \eqref{eq:temp_l1}.

	(b) Now, if $s^t_{i^\star}<1$, according to \eqref{eq:reward} we have
	
	\begin{equation} \label{eq:hold}
	r^t_{i^\star} = \frac{\left( G_{i^\star}(\bm{x}^t) + \mu_{i^\star} |\kappa^t_{i^\star}|^2/2 \right)^2}{2(\mu_{i^\star}+\frac{\|\bm{a}_{i^\star}\|^2}{\beta})|\kappa^t_{i^\star}|^2}.
	\end{equation}
	
	With $r^t_{i^\star}$ given by \eqref{eq:hold}, \eqref{eq:temp_l1} becomes
	\begin{equation*}
 \frac{\left( G_{i^\star}(\bm{x}^t) + \mu_{i^\star} |\kappa^t_{i^\star}|^2/2 \right)^2}{2(\mu_{i^\star}+\frac{\|\bm{a}_{i^\star}\|^2}{\beta})|\kappa^t_{i^\star}|^2} \geq
	G_{i^\star}(\bm{x}^t) \frac{\mu_{i^\star}}{\mu_{i^\star} + \frac{\|\bm{a}_{i^\star}\|^2}{\beta}}, 
	\end{equation*} 
	and rearranging the items, it successively becomes
	\begin{align*}
	  \frac{\left( G_{i^\star}(\bm{x}^t) + \mu_{i^\star} |\kappa^t_{i^\star}|^2/2 \right)^2}{2|\kappa^t_{i^\star}|^2} &\geq
	  G_{i^\star}(\bm{x}^t) \mu_{i^\star}  \\
	  \left( G_{i^\star}(\bm{x}^t) + \mu_{i^\star} |\kappa^t_{i^\star}|^2/2 \right)^2 &\geq
	  2 G_{i^\star}(\bm{x}^t) |\kappa^t_{i^\star}|^2 \mu_{i^\star}   \\
	   G_{i^\star}(\bm{x}^t)^2 + (\mu_{i^\star} |\kappa^t_{i^\star}|^2/2)^2 - G_{i^\star}(\bm{x}^t) |\kappa^t_{i^\star}|^2 \mu_{i^\star} &\geq
	   0  \\
	   \left( G_{i^\star}(\bm{x}^t) - \mu_{i^\star} |\kappa^t_{i^\star}|^2/2 \right)^2 &\geq 0,
	\end{align*}
	which always holds and therefore recovers the claim, i.e., \eqref{eq:temp_l1}.
	
	Hence in both cases \eqref{eq:temp_l1} holds.
	Now, plugging \eqref{eq:temp_l1} and $G(\bm{x}^t) \geq \epsilon(\bm{x}^t)$ in \eqref{eq:our_bound} yields
	
	\begin{align} \label{eq:temp_l3}
	\epsilon(\bm{x}^{t+1}) -& \epsilon(\bm{x}^t) = F(\bm{x}^{t+1}) - F(\bm{x}^t)  \leq - r_{i^\star}^t \nonumber\\& \leq
	-G(\bm{x}^t) \max_{i\in [d]} \frac{G_i(\bm{x}^t)\mu_i}{G(\bm{x}^t)\left( \mu_i + \frac{\|\bm{a}_i\|^2}{\beta} \right)} \leq - \epsilon(\bm x^t) \max_{i\in [d]} \frac{G_i(\bm{x}^t)\mu_i}{G(\bm{x}^t)\left( \mu_i + \frac{\|\bm{a}_i\|^2}{\beta} \right)},
	\end{align}
	that results in
	\begin{align} \label{eq:temp_l4}
	\epsilon(\bm{x}^{t+1}) \leq
	\epsilon(\bm x^t) - \epsilon(\bm x^t) \max_{i\in [d]} \frac{G_i(\bm{x}^t)\mu_i}{G(\bm{x}^t)\left( \mu_i + \frac{\|\bm{a}_i\|^2}{\beta} \right)},
	\end{align}
	which gives
	\begin{align} \label{eq:temp_l5}
		\epsilon(\bm{x}^{t+1}) \leq
		\epsilon(\bm x^t) \left( 1 - \max_{i\in [d]} \frac{G_i(\bm{x}^t)\mu_i}{G(\bm{x}^t)\left( \mu_i + \frac{\|\bm{a}_i\|^2}{\beta} \right)} \right),
	\end{align}
	
	
	
%
	
	As \eqref{eq:temp_l5} holds for all $t$, we conclude the proof.
\end{proof}

\begin{proof} \textbf{[Proof of Theorem~\ref{thm:main}]}
	Similar to the proof of Theorem~\ref{thm:linear}, we prove the theorem for the following selection rule: At time $t$, the coordinate $i$ with the largest $G_i(\bm{x}^t)$ is chosen.
    Since the optimal selection rule for minimizing the bound in Lemma~\ref{lem:main_us} is to select the coordinate $i$ with the largest $r_i^t$ in \eqref{eq:our_bound}, as shown by Proposition~\ref{prop:opt},
   the convergence guarantees provided here holds for max\_r as well.
	
	The bound \eqref{eq:convergence} is proven by using induction.
	%
	%
	
	Suppose that \eqref{eq:convergence} holds for some $t \geq t_0$. We want to verify it for $t+1$. 
	Let $i^\star = \text{argmax}_{i \in [d]}G_i(\bm{x}^t)$. We study two cases $s_{i^\star}^t=1$ and $s_{i^\star}^t<1$ separately, where $s_{i}^t$ is defined in \eqref{eq:s} for $i \in [d]$. 
	
	(a)
	If  $s_{i^\star}^t=1$, then first we show that 
	\begin{equation} \label{eq:linear}
		\epsilon(\bm{x}^{t+1}) \leq \epsilon(\bm{x}^t) \cdot \left(1-\frac{1}{2d}\right),
	\end{equation}
	second 	we show that induction hypothesis \eqref{eq:convergence} holds.
	Since $s_{i^\star}^t=1$, \eqref{eq:s} yields that
	\begin{equation*}
		G_{i^\star}(\bm{x}^t) \geq \frac{|\kappa_{i^\star}^t|^2\|\bm{a}_{i^\star}\|^2}{\beta} + \frac{\mu_i |\kappa^t_{i^\star}|}{2},
	\end{equation*}
	that gives
	\begin{equation*}
	G_{i^\star}(\bm{x}^t) \geq \frac{|\kappa_{i^\star}^t|^2\|\bm{a}_{i^\star}\|^2}{\beta} ,
	\end{equation*}
	which, combined with \eqref{eq:reward}, implies that
	\begin{align} \label{eq:r_s1}
		r_{i^\star}^t = G_{i^\star}(\bm{x}^t)-\frac{|\kappa^t_{i^\star}|^2\|\bm{a}_{i^\star}\|^2}{2\beta} \geq \frac{G_{i^\star}(\bm{x}^t)}{2}.
	\end{align}
	
	Using $F(x^{t+1})- F(x^t) = \epsilon(\bm{x}^{t+1})- \epsilon(\bm{x}^t)$ and \eqref{eq:r_s1}, we can rewrite \eqref{eq:our_bound} as 
	
	\begin{align*}
		\epsilon(\bm{x}^{t+1}) - &\epsilon(\bm{x}^t)
		\leq -\frac{G_{i^\star}(\bm{x}^t)}{2} .
	\end{align*}
	
	As ${i^\star}$ is the coordinate with the largest $G_i(\bm{x}^t)$, we have
	
	\begin{equation} \label{eq:temp}
		\epsilon(\bm{x}^{t+1})  - \epsilon(\bm{x}^t)
		\leq -\frac{G_{i^\star}(\bm{x}^t)}{2} \leq -\frac{G(\bm{x}^t)}{2d}.
	\end{equation}
	
	According to weak duality, $\epsilon(\bm{x}^t) \leq G(\bm{x}^t)$. Plugging this in \eqref{eq:temp} yields
	\begin{align} \label{eq:1_bound}
		\epsilon(\bm{x}^{t+1}) - \epsilon(\bm{x}^t)\leq -\frac{G(\bm{x}^t)}{2d} \leq -\frac{\epsilon(\bm{x}^t)}{2d},
	\end{align}
	
	and therefore
	
	\begin{equation} \label{eq:decrease_s1}
		\epsilon(\bm{x}^{t+1}) \leq \epsilon(\bm{x}^t) \cdot \left(1-\frac{1}{2d}\right).
	\end{equation}
	
	%
	Now, by plugging \eqref{eq:convergence} in \eqref{eq:decrease_s1} we prove the inductive step at time $l+1$:
	\begin{equation*}
		\begin{aligned}
			\epsilon(\bm{x}^{t+1}) &\leq \frac{\frac{8L^2\eta^2}{\beta }}{2d + t - t_0} \left(1-\frac{1}{2d}\right) \nonumber \\
			& \leq \frac{\frac{8L^2\eta^2}{\beta }}{2d+t+1-t_0}.
		\end{aligned}
	\end{equation*}
	
	(b)
	If  $s_{i^\star}^t<1$, the marginal decreases in \eqref{eq:reward} becomes
	\begin{equation} \label{eq:s_less_1}
	r_{i^\star}^t = \frac{\left(G_{i^\star}(\bm{x}^t) + \mu_{i^\star} \frac{|\kappa^t_{i^\star}|^2}{2} \right)^2}
	{2|\kappa^t_{i^\star}|^2 \left(\mu_{i^\star} + \frac{\|\bm{a}_{i^\star}\|^2}{\beta} \right)}.
	\end{equation}
	Next, we show that 
	\begin{equation} \label{eq:s_less_2}
	r_{i^\star}^t \geq \frac{ G_{i^\star}^2(\bm{x}^t) \beta}
	{2|\kappa^t_{i^\star}|^2  \|\bm{a}_{i^\star}\|^2}.
	\end{equation}
	To prove \eqref{eq:s_less_2}, we plug \eqref{eq:s_less_1} in  \eqref{eq:s_less_2} and rearrange the terms which gives
	\begin{equation} \label{eq:s_less_3}
	\frac{ \|\bm{a}_{i^\star}\|^2}{\beta} \left( \mu_{i^\star}^2 \frac{|\kappa_{i^\star}^t|^4}{4} +G_{i^\star}(\bm{x}^t) \mu_{i^\star} |\kappa_{i^\star}^t|^2 \right) \geq \mu_{i^\star} G_{i^\star}^2(\bm{x}^t),
	\end{equation}
	\eqref{eq:s_less_3} holds because of \eqref{eq:s}. More precisely, if we plug the value of  $s_{i^\star}^t<1$ in \eqref{eq:s} we get
	\begin{equation} \label{eq:s_less_4}
	G_{i^\star}(\bm{x}^t) \leq \frac{|\kappa_{i^\star}^t|^2\|\bm{a}_{i^\star}\|^2}{\beta} + \frac{\mu_i |\kappa^t_{i^\star}|}{2},
	\end{equation}
	which shows the correctness of \eqref{eq:s_less_3}, hence \eqref{eq:s_less_2}.

	According to Lemma~22 of \cite{SZ2013b} or Lemma~2.7 of \cite{PCJ2017} we know $|\kappa^t_i| \leq 2L$. Plugging $|\kappa^t_i| \leq 2L$ in \eqref{eq:s_less_2} yields 
	\begin{equation*} 
	r_{i^\star}^t \geq \frac{G_{i^\star}^2(\bm{x}^t) \beta}{8L^2\|\bm{a}_{i^\star}\|^2}.
	\end{equation*}
	Next, using weak duality and the definition of $\eta$ in Theorem~\ref{thm:main}, we lower bound $r_{i^\star}^t$ by
	\begin{align}\label{eq:2_bound}
		r_{i^\star}^t \geq &\left(\frac{G_{i^\star}(\bm{x}^t)  }{G(\bm{x}^t)~\|\bm{a}_{i^\star}\|} \right)^2  \frac{G^2(\bm x^t) \beta}{8L^2} \nonumber
		\\&\geq 
		 \frac{\epsilon^2(\bm x^t)\beta}{8L^2\eta^2} .
	\end{align}
	Hence we have
	
	\begin{equation*} \label{eq:induction_hyp}
		\epsilon(\bm{x}^{t+1}) -\epsilon(\bm{x}^{t}) \leq -r_{i^\star}^t \leq
		-\frac{\epsilon^2(\bm x^t)\beta}{8L^2\eta^2},
	\end{equation*}
	
	and therefore
	
	\begin{equation} \label{eq:induction_hyp2}
		\epsilon(\bm{x}^{t+1})  \leq
		\epsilon(\bm{x}^t)\left(1-\frac{\epsilon(\bm x^t)\beta}{8L^2\eta^2}\right).
	\end{equation}
	
	%
	Let $f(y) = y\left(1-\nicefrac{y\beta}{8L^2\eta^2}\right)$, as $f'(y)>0$ for $y< \nicefrac{4L^2\eta^2}{\beta}$, plugging \eqref{eq:convergence} in \eqref{eq:induction_hyp2} yields
	 \begin{equation} \label{eq:induction_hyp3}
	 \epsilon(\bm{x}^t)\left(1-\frac{\epsilon(\bm x^t)\beta}{8L^2\eta^2}\right) \leq
	 \frac{\frac{8L^2\eta^2}{\beta } }{2d+t-t_0} \left(1-\frac{\frac{8L^2\eta^2}{\beta }}{2d+t-t_0} \frac{\beta}{8L^2\eta^2}\right).
	 \end{equation}
	 Now, we prove the inductive step at time $t+1$ by using \eqref{eq:induction_hyp3}:
	\begin{align*} 
	\epsilon(\bm{x}^{t+1}) 	& \leq
		\frac{\frac{8L^2\eta^2}{\beta } }{2d+t-t_0} \cdot  \left(1-\frac{\frac{8L^2\eta^2}{\beta }}{2d+t-t_0} \frac{\beta}{8L^2\eta^2}\right) \nonumber \\&\leq \frac{\frac{8L^2\eta^2}{\beta }}{2d+t+1-t_0}.
	\end{align*}
	To conclude the proof, we need to show that the induction base case is correct, i.e., we need to show that \begin{equation} \label{eq:basis}
	\epsilon(\bm x^{t_0}) \leq \frac{4L^2\eta^2}{\beta d}.
	\end{equation}
	First, we rewrite \eqref{eq:reward} using  $r_{i^\star}^t \geq
	\nicefrac{\epsilon^2(\bm x^t)\beta}{8L^2\eta^2}$ for $s_{i^\star}^t<1$ and $r^t_{i^\star} \geq \nicefrac{\epsilon(\bm{x}^t)}{2d}$ for $s_{i^\star}^t=1$ as
	\begin{equation} \label{eq:bound_both}
	\epsilon(\bm{x}^{t+1}) - \epsilon(\bm{x}^t) \leq -r_{i^\star}^t \leq
	-1\{s_{i^\star}^t=1\} \frac{ \epsilon(\bm{x}^t)}{2d} - 1\{s_{i^\star}^t<1\} \frac{\epsilon^2(\bm x^t)\beta}{8L^2\eta^2}.
	\end{equation}
	From \eqref{eq:bound_both}, for $l < t_0$ we have
	\begin{align} \label{eq:bound_both2}
	\epsilon(\bm{x}^{t+1}) &\leq \epsilon(\bm{x}^t) \left(1 -
	1\{s_{i^\star}^t=1\} \frac{1}{2d} - 1\{s_{i^\star}^t<1\} \frac{\epsilon(\bm x^t)\beta}{8L^2\eta^2} \right) \nonumber\\& \leq
	\epsilon(\bm{x}^t) \left(1 -
	\min \left\{ \frac{1}{2d}, \frac{\epsilon(\bm x^t)\beta}{8L^2\eta^2} \right\}   \right)
	\nonumber \\& \leq
	\epsilon(\bm{x}^t) \left(1 -
	\min \left\{ \frac{1}{2d}, \frac{\epsilon(\bm x^{t_0})\beta}{8L^2\eta^2} \right\}   \right),
	\end{align}
	where \eqref{eq:bound_both2} holds because for $t\leq t_0$ we know that $\epsilon(\bm{x}^{t_0}) \leq \epsilon(\bm{x}^t)$.
	We use the proof by contradiction to check the induction base, i.e., we show that assuming $\epsilon(\bm x^{t_0}) > \nicefrac{4L^2\eta^2}{\beta d}$ results in a contradiction.
	If $\epsilon(\bm x^{t_0}) > \nicefrac{4L^2\eta^2}{\beta d}$, then 
	\begin{equation} \label{eq:contradic}
	\frac{1}{2d} = \min \left\{ \frac{1}{2d}, \frac{\epsilon(\bm x^{t_0})\beta}{8L^2\eta^2} \right\}.
	\end{equation} 
	From \eqref{eq:bound_both2} and \eqref{eq:contradic} we get
	\begin{equation}
	\epsilon(\bm{x}^{t_0}) \leq \epsilon(\bm{x}^{0}) \left(1-\frac{1}{2d} \right)^{t_0}.
	\end{equation}
	Using the inequality $1 + y < \exp(y)$ for $y<1$ we have
	\begin{align*}
	\epsilon(\bm x^{t_0}) \leq \epsilon(\bm{x}^{0}) \exp(-\frac{t_0}{2d}) &= \epsilon(\bm{x}^{0}) \exp(-\log \frac{d \beta \epsilon(\bm x^0)}{4L^2\eta^2}) \\ &
	= \epsilon(\bm{x}^{0}) \frac{4L^2\eta^2}{\beta d \epsilon(\bm x^0)} = \frac{4L^2\eta^2}{\beta d },
	\end{align*}
	which shows that the induction base holds and this concludes the proof.

\end{proof}

\begin{proof}\textbf{[Proof of Proposition~\ref{thm:main_bandit}]}
	The proof is similar to the proof of Theorem~\ref{thm:main} and it uses induction. We highlight the differences here. 
	To make the proof easier, we simplify the definition of $s^t_i$ in \eqref{eq:s} and the marginal decrease $r^t_i$ in \eqref{eq:reward} by using the upper bound $|\kappa^t_i| \leq 2 L$  (recall that $L = L_i$ for all $i$ in Proposition~\ref{thm:main_bandit}). The upper bound $|\kappa^t_i| \leq 2 L$ follows from Lemma~22 of \cite{SZ2013b}.
	The starting point of the proof is the following equation
	\begin{align} \label{eq:bound2}
	F&(\bm{x}^{t+1}) \leq ~F(\bm{x}^{t}) -  s_i^t  G_i(\bm{x}^t)  +  2 \frac{(s_i^t)^2 \|\bm{a}_1\|^2}{\beta}  L^2,
	\end{align}
	which is derived by upper bounding \eqref{eq:bound}	using  $|\kappa^t_i| \leq 2 L$ and $\mu_i = 0$ for all $i\in [d]$, which holds since $g_i(\cdot)$ are not strongly convex. Equation~\eqref{eq:bound2} holds for $s_i^t\in [0,1]$, and for all $i\in [d]$.
	After minimizing the right-hand side of \eqref{eq:bound2} with respect to $s_i^t$, we attain the following \emph{new} $s_i^t$ and the \emph{new} marginal decrease $r^t_i$:
	\begin{equation}\label{eq:s2}
	s_i^t = \min\left\{ 1,  \frac{G_i^t}{4L^2  \|\bm{a}_1\|^2 / \beta } \right\},
	\end{equation}	
	and
	\begin{equation}\label{eq:reward2} 
	r^t_i = \left\{
	\begin{array}{ll}
	G_i^t - \frac{2\|\bm{a}_1\|^2L^2}{\beta} & \mbox{if } s_i^t=1 , 
	\ \\	
	\frac{ (G_i^t)^2 }{8L^2 \|\bm{a}_1\|^2/\beta} \hspace{.2in}
	&  \text{otherwise}. \\
	
	\end{array} \right.
	\end{equation}
	
	Hereafter, let $$\alpha= \frac{8L^2\|\bm{a}_1\|^2}{\beta\left(\nicefrac{\varepsilon}{d^2} + \nicefrac{(1-\varepsilon)\eta^2}{c}\right)}$$ as defined in Proposition~\ref{thm:main_bandit}.
		
	Now, suppose that \eqref{eq:convergence} holds for some $t \geq t_0$. We want to verify it for $t+1$. 
	We start the analysis by computing the expected marginal decrease for $\varepsilon$ in Algorithm~\ref{alg:Bandit}, 
	\begin{align} \label{eq:decrease1}
	\E \left[r_i^t|\bm{x}^t\right] \geq \frac{\varepsilon}{d} \left( \sum_{s_i^t=1} r^t_i + \sum_{s_i^t<1} r^t_i \right) +  (1-\varepsilon) \frac{r^t_{i^\star}}{c},
	\end{align}
	where $c$ is a finite constant in Proposition~\ref{thm:main_bandit} and $i^\star= \argmax_{i \in [d]} r^t_i$. The expectation is with respect to the random choice of the algorithm.
	
	When $s_i^t = 1$, from \eqref{eq:s2} we have $G_i^t \geq \nicefrac{4L^2  \|\bm{a}_1\|^2}{\beta}$ and from \eqref{eq:reward2} we have $r^t_i \geq \nicefrac{2L^2  \|\bm{a}_1\|^2}{\beta}$. Plugging $r^t_i \geq \nicefrac{2L^2  \|\bm{a}_1\|^2}{\beta}$ when $s_i^t = 1$ in \eqref{eq:decrease1} yields
	
	\begin{align} \label{eq:decrease2}
	\E \left[r_i^t|\bm{x}^t\right] \geq \frac{\varepsilon}{d} \left( \sum_{s_i^t=1} \frac{2L^2  \|\bm{a}_1\|^2}{\beta} + \sum_{s_i^t<1} \frac{G_i^2(\bm{x}^t) \beta}{8L^2\|\bm{a}_{1}\|^2}\right) +  (1-\varepsilon) \frac{r^t_{i^\star}}{c},
	\end{align}
	
	(a) If $s^t_{i^\star}=1$, then the cost function decreases at least by 
	\begin{equation} \label{eq:first}
	\E \left[r_i^t|\bm{x}^t,s_{i^\star}(\bm{x}^t) = 1\right] \geq  \frac{2L^2  \|\bm{a}_1\|^2}{\beta} \left( \frac{\varepsilon}{d} + \frac{1-\epsilon}{c} \right).
	\end{equation}

    (b) Let $s^t_{i^\star}<1$, from the definition of $s^t_{i^\star}$ we know that $G_i(\bm{x}^t) \leq G_{i^\star}(\bm{x}^t)$ for all $i\in [d]$, hence we deduce that $s^t_i<1$ for all $i \in [d]$, then \eqref{eq:decrease2} reads as
    \begin{align} 
    \E \left[r_i^t|\bm{x}^t,s_{i^\star}(\bm{x}^t) <1\right] &\geq \frac{\varepsilon}{d} \left(  \sum_{i=1}^d \frac{G_i^2(\bm{x}^t) \beta}{8L^2\|\bm{a}_{1}\|^2}\right) +  (1-\varepsilon) \frac{G_{i^\star}^2(\bm{x}^t) \beta}{8L^2c\|\bm{a}_{1}\|^2} \nonumber\\& \geq \frac{\beta}{8L^2\|\bm{a}_{1}\|^2} \left( \varepsilon \frac{\left(\sum_{i=1}^d G_i(\bm{x}^t)\right)^2}{d^2} + (1-\varepsilon) \frac{G_{i^\star}^2(\bm{x}^t)}{c} \right) \nonumber\\ & \geq
    \frac{\beta}{8L^2\|\bm{a}_{1}\|^2} \left( \varepsilon \frac{G^2(\bm{x}^t)}{d^2} + (1-\varepsilon) \frac{G^2(\bm{x}^t)}{\eta^2c} \right) \label{eq:decrease3},
    \end{align}
   	where \eqref{eq:decrease3} follows from the assumption $G(\bm{x}^t) \leq \eta G_{i^\star}(\bm{x}^t)$ in Proposition~\ref{thm:main_bandit}.
   	Similar to the proof of Theorem~\ref{thm:main}, we plug the inequality $\epsilon(\bm{x}^t) < G(\bm{x}^t)$ in \eqref{eq:decrease3} and get
   	    \begin{align} 
   	    \E \left[r_i^t|\bm{x}^t,s_{i^\star}(\bm{x}^t) <1\right]  \geq
   	    \frac{\beta\epsilon^2(\bm{x}^t)}{8L^2\|\bm{a}_{1}\|^2} \left(  \frac{\varepsilon}{d^2} + \frac{(1-\varepsilon)}{\eta^2c} \right) = 
   	    \frac{\epsilon^2(\bm{x}^t)}{\alpha}
   	    . \label{eq:second}
   	    \end{align}
   	    
    Next, we use \eqref{eq:first}, \eqref{eq:second} and use the tower property to check the induction hypothesis
   	\begin{align}\label{eq:decrease4}
	\E [\epsilon(\bm{x}^{t+1})] - \E[\epsilon(\bm{x}^{t})] &\leq \E\left[\mathbf{1}\{s^t_{i^\star}=1\} \E\left[r^t_i|\bm{x}^t,s^t_{i^\star}=1\right]
	+ \mathbf{1}\{s^t_{i^\star}<=1\} \E\left[r^t_i|\bm{x}^t,s^t_{i^\star}<1\right]  \right] \nonumber\\ & \leq
	-\E\left[ \mathbf{1}\{s^t_{i^\star}=1\} \frac{2L^2  \|\bm{a}_1\|^2}{\beta} \left( \frac{\varepsilon}{d} + \frac{1-\varepsilon}{c} \right) + 
	\mathbf{1}\{s^t_{i^\star}<1\}\frac{\epsilon^2(\bm{x}^t)}{\alpha}  \right].
   	\end{align}
   
   	As we assumed $$\epsilon^2(\bm{x}^{t}) \leq \epsilon^2(\bm{x}^{0}) \leq \frac{2\alpha L^2  \|\bm{a}_1\|^2}{\beta} \left( \frac{\varepsilon}{d} + \frac{1-\varepsilon}{c} \right)$$ in Proposition~\ref{thm:main_bandit},
   	we have
   	$$\min\left\{ 
   	 \frac{2L^2  \|\bm{a}_1\|^2}{\beta} \left( \frac{\varepsilon}{d} + \frac{1-\varepsilon}{c} \right) ,
   	\frac{\epsilon^2(\bm{x}^t)}{\alpha}
   	\right\} = \frac{\epsilon^2(\bm{x}^t)}{\alpha}.$$
   	Hence, \eqref{eq:decrease4} becomes
 	\begin{align}\label{eq:decrease5}
   	\E[\epsilon(\bm{x}^{t+1})] - \E[\epsilon(\bm{x}^{t})] &\leq 
   	-\E\left[\frac{\epsilon^2(\bm{x}^t)}{\alpha}  \right] \leq -\frac{\E[\epsilon(\bm{x}^t)]^2}{\alpha} ,
   	\end{align}
   	where the last inequality is because of the Jensen's inequality (i.e., $\E[\epsilon(\bm{x}^t)]^2 \leq \E[\epsilon^2(\bm{x}^t)]$). By rearranging the terms in \eqref{eq:decrease5} we get
   	
   	\begin{align} \label{eq:decrease6}
   	\E[\epsilon(\bm{x}^{t+1})] &\leq 
   	\E\left[\epsilon(\bm{x}^{t}) \right] \left(1- \frac{\E\left[\epsilon(\bm{x}^{t}) \right]}{\alpha} \right)
   	\end{align}

 	Now, let $f(y) = y\left(1-\frac{y}{\alpha}\right)$, as $f'(y)>0$ for $y< \nicefrac{\alpha}{2}$, we can plug \eqref{eq:convergence_prop} in \eqref{eq:decrease6} and prove the inductive step at time $t+1$;
 	
 	\begin{align} 
 	\E[\epsilon(\bm{x}^{t+1})] &\leq 
 	\E\left[\epsilon(\bm{x}^{t}) \right] \left(1- \frac{\E\left[\epsilon(\bm{x}^{t}) \right]}{\alpha} \right)
 	\nonumber\\& \leq
 	\frac{\alpha}{2+t-t_0} \cdot \left(1-\frac{1}{2+t-t_0} \right) \leq 	\frac{\alpha}{2+t+1-t_0}.
 	\label{eq:temp4}
 	\end{align}

   	Finally, we need to show that the induction basis indeed is correct.
   	By using the inequality \eqref{eq:decrease5} for $t=1,\ldots,t_0$ we get 
  	\begin{align}\label{eq:decrease7}
   	\E[\epsilon(\bm{x}^{t_0})] &\leq \epsilon(\bm{x}^{0}) 
   	- \sum_{t=0}^{t_0-1}\frac{\E[\epsilon(\bm{x}^t)]^2}{\alpha} ,
   	\end{align}
   	
   	since at each iteration the cost function decreases, we have $\epsilon(\bm{x}^{t+1})\leq \epsilon(\bm{x}^{t})$ for all $t\geq 0$. 
   	Therefore, if  $\E[\epsilon(\bm{x}^{t})]\leq \nicefrac{\alpha}{2}$ for any $0\leq t \leq t_0$, we can conclude that $\E[\epsilon(\bm{x}^{t_0})]\leq \nicefrac{\alpha}{2}$. 
   	We prove the induction hypothesis by showing that $\E[\epsilon(\bm{x}^{t_0})] > \nicefrac{\alpha}{2}$ results in a contradiction.
   	 With this assumption, \eqref{eq:decrease7} becomes
	\begin{align}\label{eq:decrease8}
	\E[\epsilon(\bm{x}^{t_0})] &\leq 
	\epsilon(\bm{x}^{0}) - t_0 \frac{\alpha}{4} 
	= \epsilon(\bm{x}^{0}) \left( 1- t_0 \frac{\alpha}{4\epsilon(\bm{x}^{0})} \right),
	\end{align}
	Next, we use the inequality $1+y\leq \exp(y)$ with \eqref{eq:decrease8} 
	
	\begin{align}\label{eq:decrease9}
	\E[\epsilon(\bm{x}^{t_0})] &\leq 
	\epsilon(\bm{x}^{0}) \exp\left( -t_0 \frac{\alpha}{4\epsilon(\bm{x}^{0})} \right).
	\end{align}
	Plugging $$t_0 = \frac{4 \epsilon(\bm{x}^{0})}{\alpha} \log(\frac{2 \epsilon(\bm{x}^{0})}{\alpha})$$
   	in \eqref{eq:decrease9} yields
   	
   	\begin{align}\label{eq:decrease10}
   	\E[\epsilon(\bm{x}^{t_0})] \leq \frac{\alpha}{2} ,
   	\end{align}
   	
   	which proves the induction basis and concludes the proof.

\end{proof}

%

\begin{table} 
	\vspace*{-2.5em}
	\caption{Statistics of the datasets. The first three datasets are used for regression and the last two for binary classification.}
	\centering
	\vspace{.05in}
	\hspace*{-0.3cm}
	\begin{tabular}{ c|c|c|c|c| } 
		& \textbf{\#classes} & \textbf{\#datapoints} & \textbf{\#features} & \textbf{\ \%nonzero} \\  
		\hline
		usps & 10 & 7291 & 256 & 100\% \\ 
		%
		aloi & 1000 & 108000 & 128 & 24\% \\
		protein & 3 & 17766 & 357 & 29\% \\
		w8a & 2 & 49749	 & 300 & 4\% \\
		a9a & 2 & 32561 & 123 & 11\% \\
		\bottomrule 
	\end{tabular}
	\label{table:stats}
	\vspace{-1em}
\end{table}

\end{document}